\def\eqref#1{equation~\ref{#1}}
\def\1{\bm{1}}
\DeclareMathAlphabet{\mathsfit}{\encodingdefault}{\sfdefault}{m}{sl}
\SetMathAlphabet{\mathsfit}{bold}{\encodingdefault}{\sfdefault}{bx}{n}
\newtheorem{theorem}{Theorem}
\newtheorem{lemma}{Lemma}
\newtheorem{assumption}{Assumption}
\newtheorem{remark}{Remark}
\newcommand{\cmark}{\ding{51}}%
\newcommand{\opnorm}[1]{\|#1\|_{\text{op}}}
\newcommand{\relu}{\text{ReLU}\xspace}
\newcommand{\w}[1]{\mathbf{W}^{(#1)}}
\newcommand{\wo}[1]{\mathbf{W}_*^{(#1)}}
\definecolor{pink}{rgb}{0.858, 0.188, 0.478}
\definecolor{green}{rgb}{0.2, 0.5, 0.15}
\definecolor{orange}{rgb}{1, 0.25, 0.}
\definecolor{purple}{rgb}{0.6, 0.4, 0.7}
\definecolor{lightgray}{rgb}{0.9, 0.9, 0.9}
\definecolor{blue2}{rgb}{0.047, 0.365, 0.647}
\definecolor{red2}{rgb}{0.863, 0.075, 0.235}
\definecolor{green2}{rgb}{00.216, 0.722, 0.616.}
\definecolor{gray}{rgb}{0.620, 0.620, 0.620}
\definecolor{c1}{rgb}{1.000, 0.753, 0.796}
\definecolor{c2}{rgb}{0.980, 0.502, 0.447}
\definecolor{c3}{rgb}{1.000, 0.498, 0.314}
\definecolor{c4}{rgb}{0.863, 0.078, 0.235}
\definecolor{c5}{rgb}{0.863, 0.078, 0.235}
\definecolor{c6}{rgb}{0.545, 0.000, 0.000}
\definecolor{commentcolor}{RGB}{110,154,155}   
\newcommand{\PyComment}[1]{\ttfamily\textcolor{commentcolor}{\# #1}}  
\newcommand{\PyCode}[1]{\ttfamily\textcolor{black}{#1}} 
\newcommand{\nosection}[1]{\vspace{2pt}\noindent\textbf{#1\ }}
\newcommand{\ours}[0]{\textsc{SpikeGCL}\xspace}
\title{A Graph is Worth 1-bit Spikes: When Graph Contrastive Learning Meets Spiking Neural Networks}
\author{Jintang Li$^{1}$\thanks{Work done during an internship at Ant Group.},~ Huizhe Zhang$^{1}$,~ Ruofan Wu$^{2}$,~ Zulun Zhu$^{3}$,~ Baokun Wang$^{2}$,\\ \textbf{Changhua Meng$^{2}$,~ Zibin Zheng$^{1}$,~ Liang Chen$^{1}$}\thanks{Corresponding author.}\\
 $^{1}$Sun Yat-sen University\\
 $^{2}$Ant Group\\
 $^{3}$Nanyang Technological University\\
\small \texttt{\{lijt55,zhanghzh33\}@mail2.sysu.edu.cn;}\\
\texttt{\{ruofan.wrf,yike.wbk,changhua.mch\}@antgroup.com;}\\\texttt{ZULUN001@e.ntu.edu.sg
}
}
\begin{document}

\maketitle

\begin{abstract}
    While contrastive self-supervised learning has become the de-facto learning paradigm for graph neural networks, the pursuit of higher task accuracy requires a larger hidden dimensionality to learn informative and discriminative \underline{full-precision} representations, raising concerns about computation, memory footprint, and energy consumption burden (largely overlooked) for real-world applications. This work explores a promising direction for graph contrastive learning (GCL) with spiking neural networks (SNNs), which leverage sparse and binary characteristics to learn more biologically plausible and compact representations. We propose \ours, a novel GCL framework to learn binarized \underline{1-bit} representations for graphs, making balanced trade-offs between efficiency and performance. We provide theoretical guarantees to demonstrate that \ours has comparable expressiveness with its full-precision counterparts. Experimental results demonstrate that, with nearly 32x representation storage compression, \ours is either comparable to or outperforms many fancy state-of-the-art supervised and self-supervised methods across several graph benchmarks.
\end{abstract}

\section{Introduction}




Graph neural networks (GNNs) have demonstrated remarkable capabilities in learning representations of graphs and manifolds that are beneficial for a wide range of tasks. Especially since the advent of recent self-supervised learning techniques, rapid progress toward learning universally useful representations has been made~\cite{DBLP:journals/corr/abs-2103-00111,yu2021graph,yu2021recognizing}.
Graph contrastive learning (GCL), which aims to learn generalizable and transferable graph representations by contrasting positive and negative sample pairs taken from different graph views, has become the hotspot in graph self-supervised learning.
As an active area of research, numerous variants of GCL methods have been proposed to achieve state-of-the-art performance in graph-based learning tasks, solving the dilemma of learning useful representations from graph data without end-to-end supervision~\cite{maskgae,dgi,bgrl,ggd}.



In recent years, real-world graphs have been scaling and growing even larger. 
For example, Amazon's product recommendation graph has over 150M users and 350M products~\cite{chen2022learning}, while Microsoft Academic Graph consists of more than 120M publications and related authors, venues, organizations, and fields of study~\cite{mag}. 
New challenges beyond label annotations have arisen in terms of processing, storage, and deployment in many industrial scenarios~\cite{GZMTLK22}.
While GNNs-based contrastive learning has advanced the frontiers in many applications, current state-of-the-arts mainly requires large hidden dimensions to learn generalizable \textit{full-precision} representations~\cite{sugrl,maskgae}, making them memory inefficient, storage excessive, and computationally expensive especially for resource-constrained edge devices~\cite{hardware_gnn}.


In parallel, biological neural networks continue to inspire breakthroughs in modern neural network performance, with prominent examples including spiking neural networks (SNNs)~\cite{abs-2303-10780}. SNNs are a class of brain-inspired networks with asynchronous discrete and sparse characteristics, which have increasingly manifested their superiority in low energy consumption and inference latency~\cite{FengLTM022}.
Unlike traditional artificial neural networks (ANNs), which use floating-point outputs, SNNs use a more biologically plausible approach where neurons communicate via sparse and binarized representations, referred to as `spikes'.
Such characteristics make them a promising choice for low-power, mobile, or otherwise hardware-constrained settings.
In literature~\cite{spikformer,ijcai2022p338}, SNNs are proven to consume $\sim$100x less energy compared with modern ANNs on the neuromorphic chip (e.g. ROLLs~\cite{rolls}).



\begin{wrapfigure}{r}{0.45\linewidth}
    \centering
    \includegraphics[width=\linewidth]{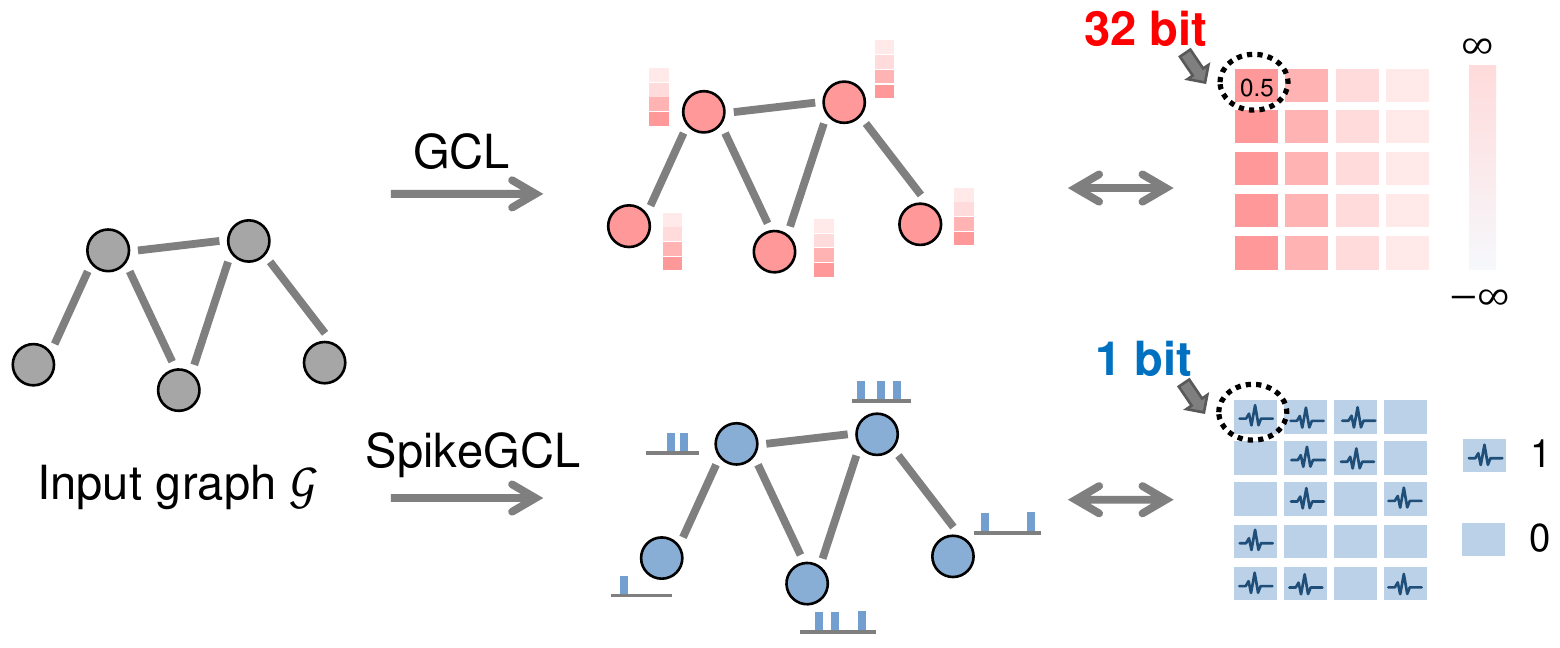}
    \vspace{-5mm}
    \caption{Comparison between conventional GCL and \ours. Instead of using \underline{full-precision} (32-bit) representations, \ours produces sparse and compact \underline{1-bit} representations, making them more memory-friendly and computationally efficient for cheap devices with limited resources. 
    }
    \vspace{-7mm}    
    \label{fig:example}
\end{wrapfigure}

Inspired by the success of vision research~\cite{kim2021beyond,spikformer}, some recent efforts have been made toward generalizing SNNs to graph data. For example, SpikingGCN~\cite{ijcai2022p338} and SpikeNet~\cite{li2023scaling} are two representative works combining SNNs with GNN architectures, in which SNNs are utilized as the basic building blocks to model the underlying graph structure and dynamics.
Despite the promising potentiality, SNNs have been under-appreciated and under-investigated in graph contrastive learning.
This has led to an interesting yet
rarely explored research question:
\begin{center}
    \textit{Can we explore the possibilities of SNNs with contrastive learning schemes to learn sparse, binarized yet generalizable representations?
    }
\end{center}

\paragraph{Present work.}
Deviating from the large body of prior works on graph contrastive learning, in this paper we take a new perspective to address self-supervised and binarized representation learning on graphs. We present \ours, a principled GCL framework built upon SNNs to learn binarized graph representations in a compact and efficient manner.
Instead of learning \underline{full-precision} node representations, we learn sparse and compact \underline{1-bit} representations to unlock the power of SNNs on graph data and enable fast inference. In addition, we noticed the problem of vanishing gradients (surprisingly overlooked) during direct training of SNNs and proposed a blockwise training approach to tackle it. Although \ours uses binarized 1-bit spikes as representations, it comes with both theoretical guarantees and comparable performance in terms of expressiveness and efficacy, respectively.
Compared with floating point counterparts, \ours leads to less storage space, smaller memory footprint, lower power consumption, and faster inference speed, all of which are essential for practical applications such as edge deployment.

Overall, our contributions can be mainly summarized as follows:
\begin{itemize}
    \item We believe our work is timely. In response to the explosive data expansion, we study the problem of self-supervised and binarized representations learning on graphs with SNNs, whose potential is especially attractive for low-power and resource-constrained settings.
    \item We propose \ours to learn binarized representations from large-scale graphs. \ours exhibits high hardware efficiency and significantly reduces memory consumption of node representations by $\sim$32x and energy consumption of GNNs by up to $\sim$7x, respectively. \ours is a theoretically guaranteed framework with powerful capabilities in learning node representations.
    \item We address the challenge of training deep SNNs with a simple blockwise learning paradigm. By limiting the backpropagation path to a single block of time steps, we prevent SNNs from suffering from the notorious problem of vanishing gradients.
    \item Extensive experiments demonstrate that \ours performs on par with or even sometimes better than advanced full-precision competitors, across multiple graphs of various scales. 
    Our results hold great promise for accurate and fast online inference of graph-based systems.
\end{itemize}

We leave two additional remarks: (i) \ours is designed to reduce computation, energy, and storage costs from the perspective of compressing graph representations using biological networks, which is orthogonal to existing advances on sampling~\cite{hamilton2017inductive}, data condensation~\cite{jin2022graph,graphsaint} and architectural simplifications~\cite{bgrl,sugrl,ggd}. (ii) To the best of our knowledge, our work is the first to explore the feasibility of implementing graph self-supervised learning with SNNs for learning binarized representations. 
We hope our work will inspire the community to explore more promising algorithms.


\section{Related Work}

\nosection{Graph contrastive learning.}
Self-supervised representation learning from graph data is a quickly evolving field.
The last years have witnessed the emergence of a promising self-supervised learning strategy, referred to as graph contrastive learning (GCL)~\cite{DBLP:journals/corr/abs-2103-00111}.
Typically, GCL works by contrasting so-called positive samples against negative ones from different graph augmentation views, which has led to the development of several novel techniques and frameworks such as DGI~\cite{dgi}, GraphCL~\cite{graphcl}, GRACE~\cite{grace}, and GGD~\cite{ggd}. Recent research has also suggested that negative samples are not always necessary for graph contrastive learning, with prominent examples including BGRL~\cite{bgrl} and CCA-SSG~\cite{cca_ssg}. Negative-sample-free approach paves the way to a simple yet effective GCL method and frees the model from intricate designs.
We refer readers to \cite{DBLP:journals/corr/abs-2103-00111} for a comprehensive review of graph contrastive learning.
Despite the achievements, current GCL designs mainly focus on task accuracy with a large hidden dimensionality~\cite{sugrl,maskgae}, and lack consideration of hardware resource limitations to meet the real-time requirements of edge application scenarios~\cite{hardware_gnn}.

\nosection{Spiking neural networks.}
SNNs, which mimic biological neural networks by leveraging sparse and event-driven activations, are very promising for low-power applications.
SNNs have been around for a while, and although they have not gained the same level of popularity as GNNs, they have steadily increased their influence in recent years.
Over the past few years, SNNs have gained significant popularity in the field of vision tasks, including image classification~\cite{spikformer}, objection detection~\cite{KimPNY20}, and segmentation~\cite{kim2021beyond}.
Efforts have been made to incorporate their biological plausibility and leverage their promising energy efficiency into GNN learning. SpikingGCN~\cite{ijcai2022p338} is a recent endeavor on encoding the node representation through SNNs. As SNNs are intrinsically dynamic with abundant temporal information conveyed by spike timing, SpikeNet~\cite{spikenet} then proceeds to model the temporal evolution of dynamic graphs via SNNs.
In spite of the increasing research interests, the benefits of SNNs have not been discovered in GCL yet.


\nosection{Binarized graph representation learning.}
Model size, memory footprint, and energy consumption are common concerns for many real-world applications~\cite{binary_gnn}.
In reaction, binarized graph representation learning has been developed in response to the need for more efficient and compact representations for learning on large-scale graph data. Currently, there are two lines of research being pursued in this area. One line of research involves directly quantizing graphs by using discrete hashing techniques~\cite{bane,binary_gnn} or estimating gradients of non-differentiable quantization process~\cite{chen2022learning}.
Another research line to produce binary representations using deep learning methods are those binarized networks, which are designed for fast inference and small memory footprint, in which binary representation is only a by-product~\cite{bigcn}.
On one hand, binarized graph representation learning is a novel and promising direction that aims to encode graphs into compact binary vectors that can facilitate efficient storage, retrieval, and computation. On the other hand, SNNs that use discrete spikes rather than real values to communicate between neurons are naturally preferred for learning binary representations. Therefore, it is intuitively promising to explore the marriage between them.

\section{Preliminaries}
\label{sec:pre}
\paragraph{Problem formulation.}
Let $\mathcal{G}=(\mathcal{V},\mathcal{E}, \mathbf{X})$ denote an attributed undirected graph where $\mathcal{V}=\{v_i\}_{i=1}^N$ and $\mathcal{E}\subseteq \mathcal{V}\times \mathcal{V}$ are a set of nodes and edges, respectively.
We focus primarily on undirected graphs though it is straightforward to extend our study to directed graphs.
$\mathcal{G}$ is associated with an $d$-dimensional attribute feature matrix $\mathbf{X}=\{x_i\}_{i=1}^N \in \mathbb{R}^{N \times d}$.
In the self-supervised learning setting of this work, our objective is to learn an encoder $f_\theta: \mathbb{R}^{d} \rightarrow \{0,1\}^{d}$ parameterized by $\theta$, which maps between the space of graph $\mathcal{G}$ and their low-dimensional and \textit{binary} latent representations $\mathbf{Z}=\{z_i\}^{N}_{i=1}$, such that $f_\theta (\mathcal{G})=\mathbf{Z} \in \{0,1\}^{N \times d}$ given $d$ the embedding dimension. Note that for simplicity we assume the feature dimensions are the same across all layers.

\paragraph{Spiking neural networks.}
Throughout existing SNNs, the integrate-fire (IF)~\cite{IF} model and its variants are commonly adopted to formulate the spiking neuron and evolve into numerous variants with different biological features.
As the name suggests, IF models have three fundamental characteristics: (i) \textbf{Integrate}. The neuron integrates current by means of the capacitor over time, which leads to a charge accumulation; (ii) \textbf{Fire}. When the membrane potential has reached or exceeded a given threshold $V_\text{th}$, it fires (i.e., emits a spike). (iii) \textbf{Reset}. After that, the membrane potential
is reset, and here we introduce two types of reset~\cite{ann2snn}: \textit{reset to zero} which always sets the membrane potential back to a constant value $V_\text{reset} < V_\text{th}$, typically zero, whereas \textit{reset by subtraction} subtracts the threshold $V_\text{th}$ from the membrane potential at the time where the threshold is exceeded:
We use a unified model to describe the dynamics of IF-based spiking neurons:
\begin{align}
    \label{eq:snn_1}
    \textbf{Integrate:} & \quad \quad {V}^{t}  = \Psi({V}^{t-1},{I}^t),           \\
    \textbf{Fire:}      & \quad \quad {S}^{t}  = \Theta({{V}^{t} - V_\text{th}}), \\
    \textbf{Reset:}     & \quad \quad
    {V}^{t}=\left\{\begin{array}{cl}
                       {S}^{t} V_\text{reset} + (1-{S}^{t}){V}^{t},        & \text{reset to zero},         \\
                       {S}^{t} ({V}^{t}-V_\text{th}) + (1-{S}^{t}){V}^{t}, & \text {reset by subtraction},
                   \end{array}\right.
\end{align}
where ${I}^{t}$ and ${V}^{t}$ denote the input current and membrane potential (voltage) at time-step $t$, respectively.
The decision to fire a spike in the neuron output is carried out according to the Heaviside step function $\Theta(\cdot)$, which is defined by $\Theta(x)= 1$ if $x \geq 0$ and $0$ otherwise.
The function $\Psi(\cdot)$ in Eq.~\ref{eq:snn_1} describes how the spiking neuron receives the resultant current and accumulates membrane potential. We have IF~\cite{IF} and its variant Leaky Integrate-and-Fire (LIF)~\cite{LIF} model, formulated as follows:
\begin{align}
    \textbf{IF:}  & \quad \quad {V}^{t}  =  {V}^{t-1}+{I}^t,                                                            \\
    \label{eq:lif}
    \textbf{LIF:} & \quad \quad {V}^{t} = {V}^{t-1} + \frac{1}{\tau_m}\left({I}^t-({V}^{t-1} - V_\text{reset}) \right),
\end{align}
where $\tau_m$ in Eq.~\ref{eq:lif} represents the membrane time constant to control how fast the membrane potential decays, leading to the membrane potential charges and discharges exponentially in response to the current inputs. Typically, $\tau_m$ can also be optimized automatically instead of manually tuning to learn different neuron dynamics during training, which we referred to as Parametric LIF (PLIF)~\cite{PLIF}. In this paper, the surrogate gradient method is used to define $\Theta^{\prime}(x) \triangleq \sigma^{\prime}(\alpha x)$ during error back-propagation, with $\sigma(\cdot)$ denote the surrogate function such as Sigmoid and $\alpha$ the smooth factor~\cite{li2023scaling}.



\section{Spiking Graph Contrastive Learning (\ours)}
In this section, we introduce our proposed \ours framework for learning binarized representations. \ours is a simple yet effective derivative of the standard GCL framework with minimal but effective modifications on the encoder design coupled with SNNs. In what follows, we first shed light on building sequential inputs for SNNs from a single non-temporal graph (\autoref{sec:grouping}) and depict how to binarize node representations using SNNs (\autoref{sec:binarize}). Then, we explain the architectural overview and detailed components of \ours one by one (\autoref{sec:framework}).
Finally, we employ blockwise learning for better training of deep SNNs (\autoref{sec:block_training}).


\subsection{Grouping node features}
\label{sec:grouping}
Typically, SNNs require sequential inputs to perform the integrated-and-fire process to emit spikes. One major challenge is how to formulate such inputs from a non-temporal graph. A common practice introduced in literature is to repeat the graph multiple times (i.e., a given time window $T$), typically followed by probabilistic encoding methods (e.g., Bernoulli encoding) to generate diverse input graphs~\cite{ijcai2022p338,ijcai2021p441}. However, this will inevitably introduce high computational and memory overheads, becoming a major bottleneck for SNNs to scale to large graphs.

In this work, we adopt the approach of partitioning the graph data into different groups rather than repeating the graph multiple times to construct the sequential inputs required for SNNs. Given a time window $T\ (T>1)$ and a graph $\mathcal{G}$, we uniformly partition the node features into the following $T$ groups\footnote{Exploring non-uniform partition with clustering methods is an important direction for future work.}: $\mathbf{X}=[\mathbf{X}^1,\ldots,\mathbf{X}^T]$ along the feature dimension, where $\mathbf{X}^t\in \mathbb{R}^{N\times \frac{d}{T}}$ consists of the group of features in the $t$-th partition. In cases where $d$ cannot be divided by $T$, we have $\mathbf{X}^t\in \mathbb{R}^{N\times (d // T)}$ for $t<T$ and $\mathbf{X}^T\in \mathbb{R}^{N\times (d\ mod\ T)}$. Thus we have $T$ subgraphs as
\begin{equation}
    \begin{aligned}
        \hat{\mathcal{G}}     =[\mathcal{G}^1,\ldots,\mathcal{G}^{T}]= [(\mathcal{V}, \mathcal{E}, \mathbf{X}^1),\ldots,(\mathcal{V}, \mathcal{E}, \mathbf{X}^T)].
    \end{aligned}
\end{equation}
Each subgraph in $\hat{\mathcal{G}}$ shares the same graph structure but only differs in node features. Note that $[\mathbf{X}^1,\ldots,\mathbf{X}^T]$ are \textit{non-overlapping} groups, which avoids unnecessary computation on input features. Our strategy leads to huge memory and computational benefits as each subgraph in the graph sequence $\hat{\mathcal{G}}$ only stores a subset of the node features instead of the whole set. This is a significant improvement over previous approaches such as SpikingGCN~\cite{ijcai2022p338}, as it offers substantial benefits in terms of computational and memory complexity.


\subsection{Binarizing graph representations}
\label{sec:binarize}
In GCL, GNNs are widely adopted as encoders for representing graph-structured data. GNNs generally follow the canonical \textit{message passing} scheme in which each node's representation is computed recursively by aggregating representations (`messages') from its immediate neighbors~\cite{kipf2016semi,hamilton2017inductive}.
Given an $L$ layer GNN $f_\theta$, the updating process of the $l$-th layer could be formulated as:
\begin{equation}
    h_u^{(l)}=\operatorname{COMBINE}^{(l)}\left(\left\{h_u^{(l-1)}, \operatorname{AGGREGATE}^{(l)}\left(\left\{h_{v}^{(l-1)}: v \in \mathcal{N}_u\right\}\right)\right\}\right)
\end{equation}
where $\operatorname{AGGREGATE}(\cdot)$ is an aggregation function that aggregates features of neighbors, and $\operatorname{COMBINE}(\cdot)$ denotes the combination of aggregated features from a central node and its neighbors. $h_u^{(l)}$ is the embedding of node $u$ at the $l$-th layer of GNN, where $l\in \{1,\dots,L\}$ and initially $h^{(0)}_{u}=x_{u}$; $\mathcal{N}_u$ is the set of neighborhoods of $u$.
After $L$ rounds of aggregation, each node $u\in\mathcal{V}$ obtains its representation vector $h_u^{(L)}$. The final node representation is denoted as $\mathbf{H}=[h_1^{(L)},\ldots,h_N^{(L)}]$.

We additionally introduce SNNs to the encoder for binarizing node representations. SNNs receive continuous values and convert them into binary spike trains, which opens up possibilities for mapping graph structure from continuous Euclidian space into discrete and binary one.
Given a graph sequence $\hat{\mathcal{G}}     =[\mathcal{G}^1,\ldots,\mathcal{G}^{T}]$, we form $T$ graph encoders $[f_{\theta_1}^1,\ldots,f_{\theta_T}^T]$ accordingly such that each encoder $f_{\theta_t}^t$ maps a graph $\mathcal{G}_t$ to its hidden representations $\mathbf{H}^t$. Here we denote $f_\theta=[f_{\theta_1}^1,\ldots,f_{\theta_T}^T]$ with slightly abuse of notations.
Then, a spiking neuron is employed on the outputs to generate binary spikes in a dynamic manner:
\begin{align}
    \mathbf{S}^t=\Theta\left(\Psi({V}^{t-1}, \mathbf{H}^t)-V_\text{th}\right), \quad
    \mathbf{H}^t=f_\theta^t(\mathcal{G}^t),
\end{align}
where $\Psi(\cdot)$ denotes a spiking function that receives the resultant current from the encoder and accumulates membrane potential to generate the spikes, such as IF or LIF introduced in \autoref{sec:pre}. $\mathbf{S}^t\in \{0,1\}^{N \times \frac{d}{T}}$ is the output spikes at time step $t$.
In this regard, the binary representations are derived by taking historical spikes in each time step with a \textit{concatenate} pooling~\cite{li2023scaling}, i.e., $\mathbf{Z}=\left(\mathbf{S}^1 ||\cdots||\mathbf{S}^T\right)$.

\subsection{Overall framework}
\label{sec:framework}
We decompose our proposed \ours from four dimensions: (i) augmentation, (ii) encoder,  (iii) predictor head (a.k.a. decoder), and (iv) contrastive objective. These four components constitute the design space of interest in this work. We present an architectural overview of \ours in Figure~\hyperref[fig:framework]{2(a)}, as well as the core step of embedding generation in Figure~\hyperref[fig:framework]{2(b)}.

\begin{figure}[t]
    \centering
    \includegraphics[width=\linewidth]{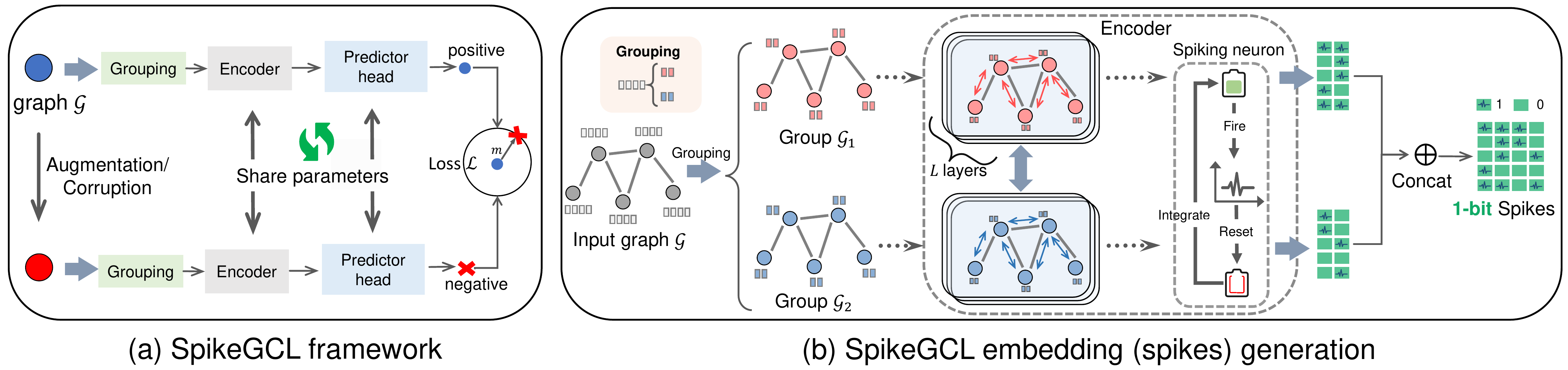}
    \vspace{-5mm}
    \caption{Overview of \ours framework. \textbf{(a)} \ours follows the standard GCL philosophy, which learns binary representations by contrasting positive and negative samples with a margin ranking loss. \textbf{(b)} \ours first partitions node features into $T$ non-overlapping groups, each of which is then fed to an encoder whereby spiking neurons represent nodes of graph as 1-bit spikes.}
    \label{fig:framework}
\end{figure}

\nosection{Augmentation.}
Augmentation is crucial for contrastive learning by providing different graph views for contrasting.
With a given graph $\mathcal{G}$, \ours involves bi-level augmentation techniques, i.e., topology (structure) and feature augmentation, to construct its corrupted version $\tilde{\mathcal{G}}$.
To be specific, we harness two ways for augmentation: \textit{edge dropping} and \textit{feature shuffling}. Edge dropping randomly drops a subset of edges from the original graph, while feature shuffling gives a new permutation of the node feature matrix along the feature dimension.
In this way, we obtain the corrupted version of an original graph $\tilde{\mathcal{G}}=\{\mathcal{V}, \tilde{\mathcal{E}}, \tilde{\mathbf{X}}\}$, where $\tilde{\mathcal{E}}\subseteq \mathcal{E}$ and $\tilde{\mathbf{X}}$ denotes the column-wise shuffled feature matrix such that $\tilde{\mathbf{X}}=\mathbf{X}[:,\mathcal{P}]$ with $\mathcal{P}$ the new permutation of features.
Since we partition node features in a sequential manner that is sensitive to the permutation of input features, feature shuffling is able to offer \textit{hard negative} samples for contrastive learning.

\nosection{Encoder.}
As detailed in \autoref{sec:binarize}, our encoder is a series of peer GNNs corresponding to each group of input node features, followed by a spiking neuron to binarize the representations.
Among the many variants of GNNs, GCN~\cite{kipf2016semi} is the most representative structure, and we adopt it as the basic unit of the encoder in this work.
The number of peer GNNs in the encoder is relative to the number of time steps $T$, which makes the model excessively complex and potentially lead to overfitting if $T$ is large.
We circumvent this problem by \textbf{parameter sharing}.
Note that only the first layer is different in response to diverse groups of features. For the remaining $L-1$ layers, parameters are shared across peer GNNs to prevent excessive memory footprint and the overfitting issue.


\nosection{Predictor head (decoder).}
A non-linear transformation named projection head is adopted to map binarized representations to continuous latent space where the contrastive loss is calculated, as advocated in \cite{bgrl,cca_ssg}.
As a proxy for learning on discrete spikes, we employ a single-layer perceptron (MLP) to the learned representations, i.e., $g_\phi(z_u)=\mbox{MLP}(z_u), \forall u\in \mathcal{V}$.
Since the hidden representations are binary, we can instead use `masked summation'~\cite{li2023scaling} to enjoy the sparse characteristics and avoid expensive matrix multiplication computations during training and inference.

\nosection{Contrastive objective.}
Contrastive objectives are widely used to measure the similarity or distance between positive and negative samples.
Rather than explicitly maximizing the discrepancy between positive and negative pairs as most existing works on contrastive learning have done, the `contrastiveness' in this approach is reflected in the diverse `distance' naturally measured by a parameterized model $g_\phi(\cdot)$. Specifically, we employ a margin ranking loss (MRL)~\cite{ChenK0H20} to $g_\phi(\cdot)$:
\begin{equation}
    \mathcal{J}=\frac{1}{N}\sum_{u \in \mathcal{V}}\max (0,g_\phi(z_u)- g_\phi(z_u^-) + m),
\end{equation}
with the margin $m$ a hyperparameter to make sure the model disregards abundant far (easy) negatives and leverages scarce nearby (hard) negatives during training. $z_u$ and $z_u^-$ is the representation of node $u$ obtained from original graph sequence $\hat{\mathcal{G}}$ and its corrupted one, respectively.
MRL forces the score of positive samples to be lower (towards zero) and assigns a higher score to negative samples by a margin of at least $m$. Therefore, positive samples are separated from negative ones.

\subsection{Blockwise surrogate gradient learning}
\label{sec:block_training}
From an optimization point of view, SNNs lack straightforward gradient calculation for backward propagation, and also methods that effectively leverage their inherent advantages. Recent attempts have been made to surrogate gradient learning~\cite{LeeDP16}, an alternative of gradient descent that avoids the non-differentiability of spike signals. As a backpropagation-like training method, it approximates the backward gradients of the hard threshold function using a smooth activation function (such as Sigmoid) during backpropagation. 
At present, the surrogate learning technique plays an extremely important role in advanced methods to learn SNNs properly~\cite{ijcai2022p338,ijcai2021p441,li2023scaling,FangYCHMT21,PLIF}.


Despite the promising results, surrogate gradient learning has its own drawbacks. Typically, SNNs require relatively large time steps to approximate continuous inputs and endow the network with better expressiveness~\cite{ijcai2022p338}. However, a large time step often leads to many problems such as high overheads and network degradation~\cite{FangYCHMT21}. Particularly, the training of SNNs also comes with a serious vanishing gradient problem in which gradients quickly diminish with time steps. We have further discussion with respect to the phenomenon in \autoref{appendix:vanishing}.
These drawbacks greatly limit the performance of directly trained SNNs and prevent them from going `deeper' with long sequence inputs.

\begin{wrapfigure}{r}{{0.42\linewidth}}
    \centering
    \vspace{-5mm}
    \includegraphics[width=\linewidth]{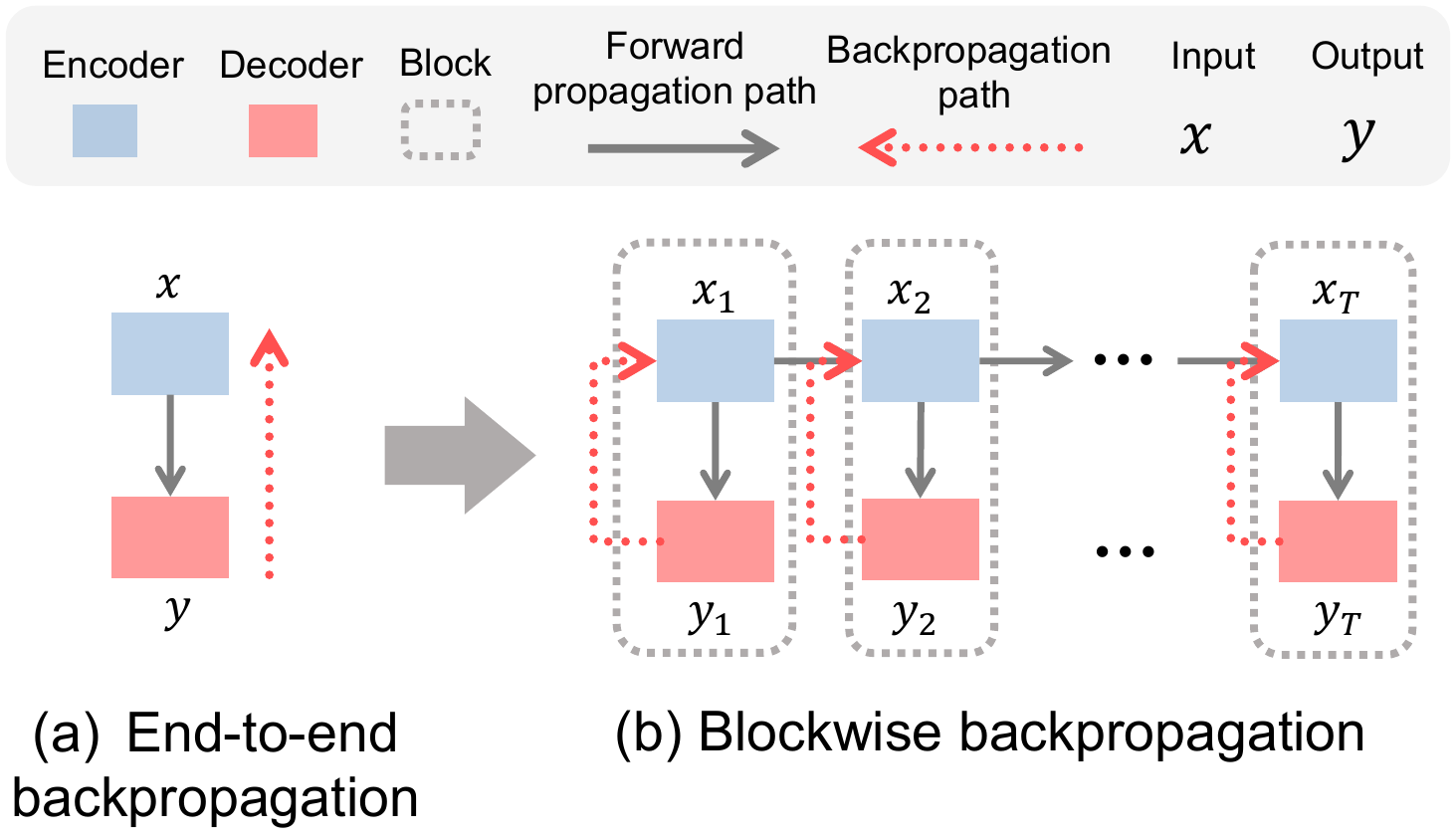}
    \vspace{-7mm}
    \caption{A comparison between two backpropagation learning paradigms. The backpropagation path during blockwise training is limited to a single block of networks to avoid a large memory footprint and vanishing gradient problem.}
    \label{fig:block}
\end{wrapfigure}

In this paper, we explore alternatives to end-to-end backpropagation in the form of surrogate gradient learning rules, leveraging the latest advances in self-supervised learning~\cite{blockssl} to address the above limitation in SNNs.
Specifically, we propose a blockwise training strategy that separately learns each block with a local objective. We consider one or more consecutive time steps as a single block, and limit the length of the backpropagation path to each of these blocks. Parameters within each block are optimized locally with a contrastive learning objective, using stop-gradient to prevent the gradients from flowing through the blocks.
A technical comparison between the existing end-to-end surrogate learning paradigm and our proposed local training strategy is shown in Figure~\ref{fig:block}.

\section{Theoretical guarantees}
There are several works on designing practical graph-based SNNs that achieve comparable performance with GNNs~\cite{spikenet,ijcai2021p441,ijcai2022p338}. However, the basic principles and theoretical groundwork for their performance guarantees are lacking, and related research only shows the similarity between the IF neuron model and the ReLU activation~\cite{ann2snn}. In this section, we are motivated to bridge the gap between SNNs and GNNs.
We present an overview of theoretical results regarding the approximation properties of \ours with respect to its full-precision counterparts through the following theorem:
\begin{theorem}[Informal]
    \label{thm:expressiveness}
    For any full-precision GNN with a hidden dimension of $d/T$, there exists a corresponding \ours such that its approximation error, defined as the $\ell_2$ distance between the \emph{firing rates} of the \ours representation and the GNN representation at any single node, is of the order $\Theta(1/T)$.
\end{theorem}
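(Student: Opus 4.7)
The plan is to leverage the well-known correspondence between IF spiking neurons and ReLU-type activations from the ANN-to-SNN conversion literature. The central identity, applied to a single reset-by-subtraction IF neuron driven by inputs $I^1,\ldots,I^T$ with threshold $V_{\text{th}}$ and zero initial potential, is
\begin{equation*}
\frac{1}{T}\sum_{t=1}^{T} S^t \;=\; \frac{1}{T V_{\text{th}}}\left(\sum_{t=1}^{T} I^t - V^T\right),
\end{equation*}
where the residual membrane potential $V^T$ at termination is bounded in magnitude by $V_{\text{th}}$ by the firing rule. This immediately gives an $O(1/T)$ approximation of the (scaled) cumulative input by the firing rate, and is the only quantitative fact we will need about the neuron.

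First, I would formalize the construction. Given a target full-precision GNN $f^{\ast}$ with hidden dimension $d/T$ and a node $u$, build the $T$ peer encoders $\{f^{t}_{\theta_t}\}$ of \ours so that at time step $t$ the induced current $\mathbf{H}^t_u = f^{t}_{\theta_t}(\mathcal{G}^t)_u$ matches, up to scaling by $V_{\text{th}}$, the corresponding block-coordinate of the ANN forward pass. The parameter-sharing prescription from Section~\ref{sec:framework} makes this consistent: only the first layer differs across blocks (absorbing the per-group feature split $\mathbf{X}^t$) and the remaining $L-1$ layers share weights so that their output equals $f^{\ast}$'s output after aggregation. The scaling is absorbed into $V_{\text{th}}$ so that the ANN activation is the clipped identity that IF firing rates implement.

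Next, applying the firing-rate identity coordinate-wise yields a per-coordinate error bounded by $V_{\text{th}}/T$. Summing the squared errors over the $d/T$ coordinates gives
\begin{equation*}
\bigl\|\mathbf{r}_u - \mathbf{H}^{\ast}_u\bigr\|_2 \;\leq\; \frac{V_{\text{th}}}{T}\sqrt{d/T} \;=\; O(1/T),
\end{equation*}
where $\mathbf{r}_u$ is the firing-rate vector at node $u$ and $\mathbf{H}^{\ast}_u$ is the ANN representation at node $u$. The matching lower bound $\Omega(1/T)$ comes from exhibiting an instance in which $\sum_t I^t / V_{\text{th}}$ is not an integer multiple of the time horizon for at least one coordinate, so the residual $V^T$ is bounded away from zero and the quantization gap is realized.

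The main obstacle is the multi-layer composition: errors at layer $\ell$ propagate into layer $\ell+1$ through both $\operatorname{AGGREGATE}$ and $\operatorname{COMBINE}$, and the spiking nonlinearity does not commute cleanly with message passing. I would discharge this by induction on depth, using Lipschitz bounds on the aggregation and combine maps together with the operator norms of the weight matrices and the maximum degree of $\mathcal{G}$, yielding a constant depending on $L$ but independent of $T$. A secondary subtlety is justifying that the partitioned sequence $(\mathbf{X}^1,\ldots,\mathbf{X}^T)$ over time carries the same functional content as the full concatenated input to the reference encoder; this is exactly where parameter sharing in layers $2,\ldots,L$ (with per-block first-layer weights absorbing the feature partition) becomes essential, and it is the step I expect to require the most care to state cleanly.
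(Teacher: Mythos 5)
You have reconstructed essentially the paper's argument: your firing-rate identity is the paper's Lemma~1, your construction (per-group first-layer weights absorbing the feature partition, shared and $V_\text{th}$-scaled deeper layers) is the paper's construction, and your inductive propagation of errors through the $L$ layers via the Lipschitz bound $|\relu(x+y)-\relu(x)|\le|y|$ together with the operator-norm and degree assumptions is exactly how the paper accumulates its per-layer remainder terms to reach the $O\bigl(\sqrt{d/T}\,(\sqrt{1+D}\,\nu)^{L}/T\bigr)$ rate.

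One step of your sketch does not hold as stated: you assert that the residual membrane potential $V^T$ is bounded in magnitude by $V_\text{th}$ ``by the firing rule.'' The firing rule only caps the potential from above (after a reset-by-subtraction it sits below threshold); the input currents are outputs of linear layers and can be negative, so a sustained negative current drives the potential arbitrarily far below zero and no two-sided bound follows automatically. The paper closes exactly this hole with an explicit assumption that the potential is bounded below by $V_\text{lb}$ with $|V_\text{lb}|\le\kappa V_\text{th}$, and the factor $\kappa$ then appears in the final bound; it remarks that such a lower bound could alternatively be derived from the operator-norm and degree assumptions. Your argument needs one of these two fixes before the per-coordinate $O(V_\text{th}/T)$ claim is valid. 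On the positive side, your sketch of a matching $\Omega(1/T)$ instance addresses the $\Theta(\cdot)$ phrasing of the informal statement, which the paper's formal theorem does not actually establish (it proves only the upper bound), so that part is a genuine, if unelaborated, addition beyond what the paper provides.
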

In the order relation $\Theta(\frac{1}{T})$, we hide factors dependent upon the underlying GNN structure, network depth as well as characteristics of the input graph, which will be stated precisely in Theorem \autoref{thm: approx} in \autoref{appendix:proof}. Theorem \ref{thm:expressiveness} suggests that with a sufficiently large node feature dimension such that we may set a large simulation length $T$, we are able to (almost) implement vanilla GNNs with a much better computational and energy efficiency. Moreover, the approximation is defined using the firing rates of SNN outputs, which measures only a restricted set of inductive biases offered by \ours. Consequently, we empirically observe that a moderate level of $T$ might also provide satisfactory performance in our experiments.
Our analysis is presented to shed insights on the connection between the computational neuroscience model (e.g., SNNs) and the machine learning neural network model (e.g., GNNs). This connection has been analytically proven under certain conditions and empirically demonstrated through our experiments (see \autoref{sec:exp}). It can serve as the theoretical basis for potentially combining the respective merits of the two types of neural networks.




\section{Experiments}
\label{sec:exp}
In this section, we perform experimental evaluations to demonstrate the effectiveness of our proposed \ours framework.
Due to space limitations, we present detailed experimental settings and additional results in \autoref{appendix:settings} and \autoref{appendix:exp}. 

\nosection{Datasets.}
We evaluate \ours on several graph benchmarks with different scales and properties. Following prior works~\cite{cca_ssg,bgrl,ijcai2022p338}, we adopt 9 common benchmark graphs, including two co-purchase graphs, i.e., Amazon-Photo, Amazon-Computer~\cite{shchur2018pitfalls}, two co-author graphs, i.e., Coauthor-CS and Coauthor-Physics~\cite{shchur2018pitfalls}, three citation graphs, i.e., Cora, CiteSeer, PubMed~\cite{sen2008collective}, as well as two large-scale datasets ogbn-arXiv and ogbn-MAG from Open Graph Benchmark~\cite{hu2020ogb}.
The detailed introduction and statistics of these datasets are presented in \autoref{appendix:settings}.

\nosection{Baselines.}
We compare our proposed methods to a wide range of baselines that fall into four categories: (i) full-precision (supervised) GNNs: GCN~\cite{kipf2016semi} and GAT~\cite{gat}; (ii) 1-bit quantization-based GNNs: Bi-GCN~\cite{bigcn}, BinaryGNN~\cite{binary_gnn} and BANE~\cite{bane}; (iii) contrastive methods: DGI~\cite{dgi}, GRACE~\cite{grace}, CCA-SSG~\cite{cca_ssg}, BGRL~\cite{bgrl}, SUGRL~\cite{sugrl}, and GGD~\cite{ggd}; (iv) Graph SNNs: SpikingGCN~\cite{ijcai2022p338}, SpikeNet~\cite{li2023scaling}, GC-SNN and GA-SNN~\cite{ijcai2021p441}. SpikeNet is initially designed for dynamic graphs, we adapt the author's implementation to static graphs following the practice in \cite{ijcai2022p338,ijcai2021p441}.
The hyperparameters of all the baselines were configured according to the experimental settings officially reported by the authors and were then carefully tuned in our experiments to achieve their best results.

\begin{table}[h]
    \centering
    \caption{Classification accuracy (\%) on six large scale datasets. The best result for each dataset is highlighted in \textbf{\color{red} red}.
        The missing results are due to the out-of-memory error on a GPU with 24GB memory. (\textbf{U}: unsupervised or self-supervised; \textbf{S}: spike-based; \textbf{B}: binarized)}
    \resizebox{\textwidth}{!}{
        \begin{threeparttable}
            \begin{tabular}{l| ccc | c c c c c c}
            \toprule
                                & \textbf{U}           & \textbf{S}           & \textbf{B}           & \textbf{Computers}            & \textbf{Photo}                & \textbf{CS}                   & \textbf{Physics}               & \textbf{arXiv}                & \textbf{MAG}                  \\
                \midrule
                 GCN~\cite{kipf2016semi}         &                      &                      &                      & 86.5$_{\pm0.5}$             & 92.4$_{\pm0.2}$             & 92.5$_{\pm0.4}$               & 95.7$_{\pm0.5}$                & 70.4$_{\pm0.3}$             & 30.1$_{\pm0.3}$             \\
                GAT~\cite{gat}                  &                      &                      &                      & 86.9$_{\pm0.2}$             & 92.5$_{\pm0.3}$             & 92.3$_{\pm0.2}$               & 95.4$_{\pm0.3}$                & 70.6$_{\pm0.3}$             & 30.5$_{\pm0.3}$             \\
                \midrule
                SpikeNet~\cite{li2023scaling}   &                      & {\color{pink}\cmark} &                      & 88.0$_{\pm0.7}$               & 92.9$_{\pm0.1}$               & \textbf{\color{red}93.4$_{\pm0.2}$}            & \textbf{\color{red}95.8$_{\pm0.7}$}                & 66.8$_{\pm0.1}$               & -                             \\
                SpikingGCN~\cite{ijcai2022p338} &                      & {\color{pink}\cmark} &                      & 86.9$_{\pm0.3}$               & 92.6$_{\pm0.7}$               & 92.6$_{\pm0.3}$               & 94.3$_{\pm0.1}$                & 55.8$_{\pm0.7}$                             & -                             \\
                GC-SNN~\cite{ijcai2021p441}     &                      & {\color{pink}\cmark} &                      & 88.2$_{\pm0.6}$               & 92.8$_{\pm0.1}$               & 93.0$_{\pm0.4}$               & 95.6$_{\pm0.7}$                & -                             & -                             \\
                GA-SNN~\cite{ijcai2021p441}     &                      & {\color{pink}\cmark} &                      & 88.1$_{\pm0.1}$               & \textbf{\color{red}93.5$_{\pm0.6}$}               & 92.2$_{\pm0.1}$               & 95.8$_{\pm0.5}$                & -                             & -                             \\
                \midrule
                Bi-GCN~\cite{bigcn}             &                      &                      & {\color{pink}\cmark} & 86.4$_{\pm0.3}$               & 92.1$_{\pm0.9}$               & 91.0$_{\pm0.7}$               & 93.3$_{\pm1.1}$                & 66.0$_{\pm0.8}$               & 28.2$_{\pm0.4}$               \\
                BinaryGNN~\cite{binary_gnn}    &                      &                      & {\color{pink}\cmark} & 87.8$_{\pm0.2}$               & 92.4$_{\pm0.2}$               & 91.2$_{\pm0.1}$               & 95.3$_{\pm0.1}$                & 67.2$_{\pm0.9}$               & -                             \\
                BANE~\cite{bane}                & {\color{pink}\cmark} &                      & {\color{pink}\cmark} & 72.7$_{\pm0.3}$               & 78.2$_{\pm0.3}$               & 92.8$_{\pm0.1}$               & 93.4$_{\pm0.4}$                              & $>$3days                             & $>$3days                             \\
                \midrule
                DGI~\cite{dgi}                  & {\color{pink}\cmark} &                      &                      & 84.0$_{\pm0.5}$               & 91.6$_{\pm0.2}$               & 92.2$_{\pm0.6}$               & 94.5$_{\pm0.5}$                & 65.1$_{\pm0.4}$               & 31.4$_{\pm0.3}$               \\
                GRACE~\cite{grace}              & {\color{pink}\cmark} &                      &                      & 86.3$_{\pm0.3}$               & 92.2$_{\pm0.2}$               & 92.9$_{\pm0.0}$               & 95.3$_{\pm0.0}$                & 68.7$_{\pm0.4}$               & 31.5$_{\pm0.3}$               \\
                CCA-SSG~\cite{cca_ssg}          & {\color{pink}\cmark} &                      &                      & 88.7$_{\pm0.3}$               & 93.1$_{\pm0.1}$               & 93.3$_{\pm0.1}$               & 95.7$_{\pm0.1}$                & 71.2$_{\pm0.2}$             & 31.8$_{\pm0.4}$               \\
                BGRL~\cite{bgrl}                & {\color{pink}\cmark} &                      &                      & \textbf{\color{red}90.3$_{\pm0.2}$}               & 93.2$_{\pm0.3}$               & 93.3$_{\pm0.1}$             & 95.7$_{\pm0.0}$                & 71.6$_{\pm0.1}$            & 31.1$_{\pm0.1}$               \\
                SUGRL~\cite{sugrl}              & {\color{pink}\cmark} &                      &                      & 88.9$_{\pm0.2}$              & 93.2$_{\pm0.4}$               & 93.4$_{\pm0.0}$               & 95.2$_{\pm0.0}$                & 68.8$_{\pm0.4}$               & \textbf{\color{red}32.4$_{\pm0.1}$}              \\
                GGD~\cite{ggd}                  & {\color{pink}\cmark} &                      &                      & 88.0$_{\pm0.1}$               & 92.9$_{\pm0.2}$               & 93.1$_{\pm0.1}$               & 95.3$_{\pm0.0}$                & \textbf{\color{red}71.6$_{\pm0.5}$}               & 31.7$_{\pm0.7}$            \\
                \midrule
                \ours                           & {\color{pink}\cmark} & {\color{pink}\cmark} & {\color{pink}\cmark} & {88.9$_{\pm0.3}$} & {93.0$_{\pm0.1}$} & {92.8$_{\pm0.1}$} & {95.2$_{\pm0.6}$} & {70.9$_{\pm0.1}$} & {32.0$_{\pm0.3}$} \\
                \bottomrule
            \end{tabular}
        \end{threeparttable}
    }
    \label{tab:node_clas}
\end{table}

\nosection{Overall performance.}
The results on six graph datasets are summarized in Table~\ref{tab:node_clas}.
We defer the results on Cora, CiteSeer, and PubMed to \autoref{appendix:exp}.
Table~\ref{tab:node_clas} shows a significant performance gap between full-precision methods and 1-bit GNNs, particularly with the unsupervised method BANE. Even supervised methods Bi-GCN and binaryGNN struggle to match the performance of full-precision methods. In contrast, \ours competes comfortably and sometimes outperforms advanced full-precision methods. When compared to full-precision GCL methods, our model approximates about 95\%$\sim$99\% of their performance capability across all datasets. Additionally, \ours performs comparably to supervised graph SNNs and offers the possibility to scale to large graph datasets (e.g., arXiv and MAG). 
Overall, the results indicate that binary representation does not necessarily lead to accuracy loss as long as it is properly trained.
%


\nosection{Efficiency.}
We first provide a comparison of the number of parameters and theoretical energy consumption of \ours and full-precision GCL methods. The computation of theoretical energy consumption follows existing works~\cite{ijcai2022p338,spikformer,bigcn} and is detailed in \autoref{appendix:settings}. Table~\ref{tab:efficiency} summarizes the results in terms of model size and energy efficiency compared to full-precision GCL methods. \ours consistently shows better efficiency across all datasets, achieving $\sim$7x less energy consumption and $\sim$1/60 model size on MAG.
As the time step $T$ is a critical hyperparameter for SNNs to better approximate real-valued inputs with discrete spikes, we compare the efficiency of \ours and other graph SNNs in terms of accuracy, training time, GPU memory usage, and theoretical energy consumption with varying time steps from 5 to 30. The results on the Computers dataset are shown in Figure~\ref{fig:time_step}. It is observed that \ours and other graph SNNs benefit from increasing time steps, generally improving accuracy as the time step increases. However, a large time step often leads to more overheads. In our experiments, a larger time step can make SNNs inefficient and become a major bottleneck to scaling to large graphs, as demonstrated by increasing training time, memory usage, and energy consumption. Nevertheless, the efficiency of \ours is less affected by increasing time steps. The results demonstrate that \ours, coupled with compact graph sequence input and blockwise training paradigm, alleviates such a drawback of training deep SNNs.
Overall, \ours is able to significantly reduces computation costs and enhance the representational learning capabilities of the model simultaneously.

\begin{table}[t]
    \centering
    \caption{The parameter size (KB) and theoretical energy consumption (mJ) of various GCL methods. The row in `Average' denotes the averaged results of full-precision GCL baselines. Darker color in \ours indicates a larger improvement in efficiency over the baselines.}
    \resizebox{\linewidth}{!}
    {
        \begin{threeparttable}
            \begin{tabular}{l| c c c c c c c c c c}
                \toprule
                \multirow{2}{*}{} & \multicolumn{2}{c}{\textbf{Computers}} & \multicolumn{2}{c}
                {\textbf{CS}}                    & \multicolumn{2}{c}{\textbf{Physics}}   & \multicolumn{2}{c}{\textbf{arXiv}} & \multicolumn{2}{c}{\textbf{MAG}}                                                                                                                                                                                                                    \\
                \cmidrule(lr){2-3} \cmidrule(lr){4-5} \cmidrule(lr){6-7} \cmidrule(lr){8-9} \cmidrule(lr){10-11}
                                                 & \#Param$\downarrow$                    & Energy$\downarrow$                 & \#Param$\downarrow$              & Energy$\downarrow$          & \#Param$\downarrow$          & Energy$\downarrow$          & \#Param$\downarrow$         & Energy$\downarrow$         & \#Param$\downarrow$         & Energy$\downarrow$           \\
                \midrule
                DGI                              & 917.5                                  & 0.5                                & 4008.9                           & 8                           & 4833.3                       & 6                           & 590.3                       & 5                          & 590.3                       & 568                          \\
                GRACE                            & 656.1                                  & 1.1                                & 3747.5                           & 17                          & 4571.9                       & 13                          & 328.9                       & 21                         & 328.9                       & 4463                         \\
                CCA-SSG                          & 262.4                                  & 17                                 & 1808.1                           & 152                         & 2220.2                       & 352                         & 98.8                        & 78                         & 98.8                        & 340                          \\
                BGRL                             & 658.4                                  & 25                                 & 3749.8                           & 163                         & 4574.2                       & 373                         & 331.2                       & 180                        & 331.2                       & 787                          \\
                SUGRL                            & 193.8                                  & 13                                 & 2131.2                           & 147                         & 2615.1                       & 342                         & 99.5                        & 26                         & 99.5                        & 117                          \\
                GGD                              & 254.7                                  & 15                                 & 3747.3                           & 140                         & 4571.6                       & 340                         & 30.0                        & 100                        & 30.0                        & 1400                         \\
                \midrule
                Average                          & \colorbox{lightgray}{490.4}            & \colorbox{lightgray}{11.9}         & \colorbox{lightgray}{3198.7}     & \colorbox{lightgray}{104.5} & \colorbox{lightgray}{3906.0} & \colorbox{lightgray}{237.6} & \colorbox{lightgray}{246.4} & \colorbox{lightgray}{68.3} & \colorbox{lightgray}{246.4} & \colorbox{lightgray}{1279.1} \\
                SpikeGCL                         & \colorbox{c1}{60.9}                    & \colorbox{c3}{0.038}               & \colorbox{c2}{460.7}             & \colorbox{c4}{0.048}        & \colorbox{c2}{564.4}         & \colorbox{c4}{0.068}        & \colorbox{c2}{7.3}          & \colorbox{c2}{0.2}         & \colorbox{c2}{6.6}          & \colorbox{c4}{0.18}          \\
                \bottomrule
            \end{tabular}
        \end{threeparttable}
    }
    \label{tab:efficiency}
\end{table}

\begin{figure}[t]
    \centering
    \includegraphics[width=0.24\linewidth, height=0.15\hsize]{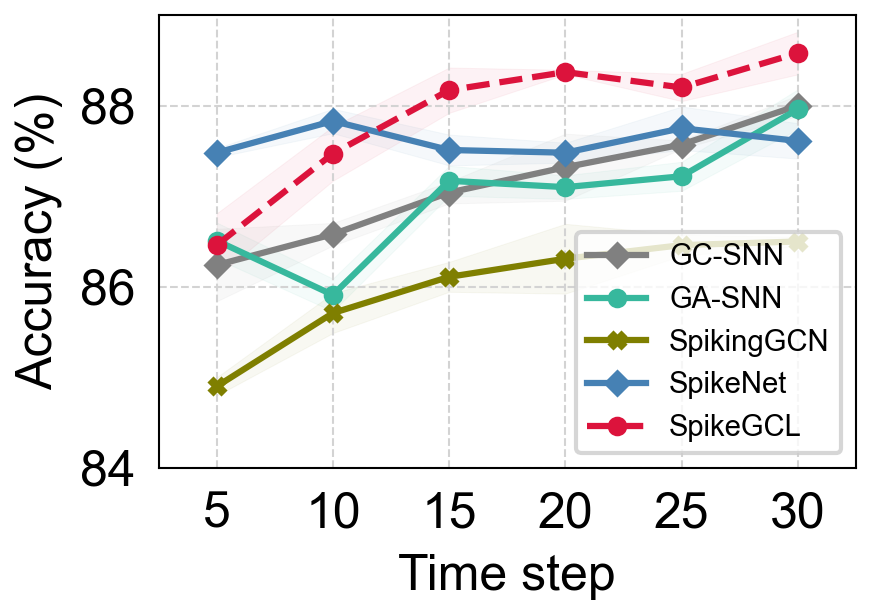}
    \includegraphics[width=0.24\linewidth, height=0.15\hsize]{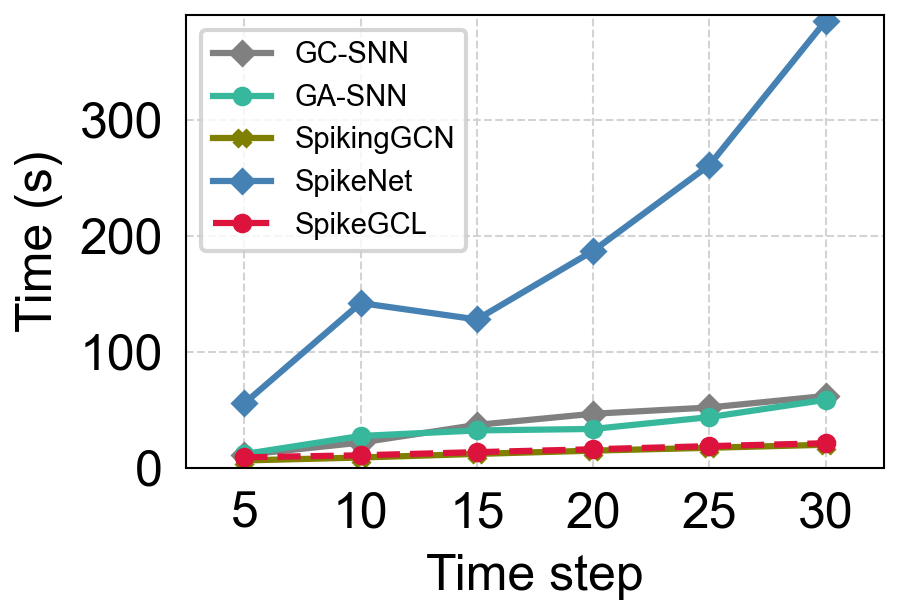}
    \includegraphics[width=0.24\linewidth, height=0.15\hsize]{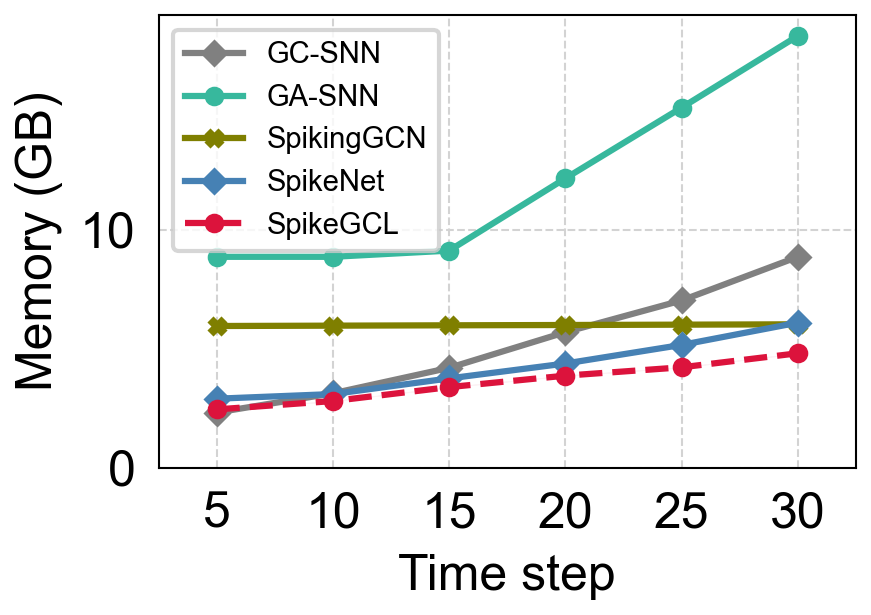}
    \includegraphics[width=0.24\linewidth, height=0.15\hsize]{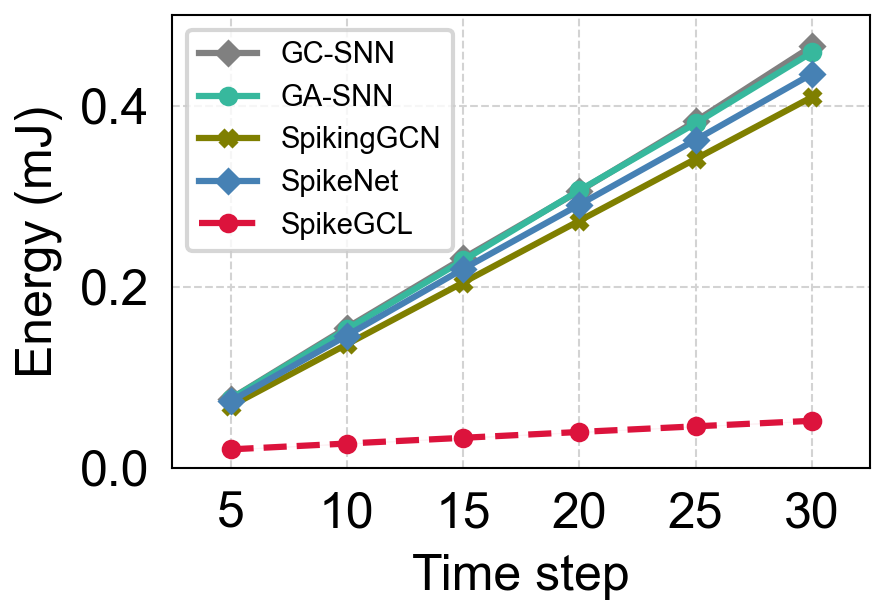}
    \caption{Comparison of \ours and other graph SNNs in terms of accuracy (\%), training time (s), memory usage (GB), and energy consumption (mJ), respectively.}
    \label{fig:time_step}
\end{figure}

\section{Conclusion}
In this work, we present \ours, a principled graph contrastive learning framework to learn binarized and compact representations for graphs at scale. \ours leverages the sparse and binary characteristics of SNNs, as well as contrastive learning paradigms, to meet the challenges of increasing graph scale and limited label annotations.
The binarized representations learned by \ours require less memory and computations compared to traditional GCL, which leads to potential benefits for energy-efficient computing.
We provide theoretical guarantees and empirical results to demonstrate that \ours is an efficient yet effective approach for learning binarized graph representations.
In our extensive experimental evaluation, \ours is able to achieve performance on par with advanced baselines using full-precision or 1-bit representations while demonstrating significant efficiency advantages in terms of parameters, speed, memory usage, and energy consumption.
We believe that our work is promising from a neuroscientific standpoint, and we hope it will inspire further research toward efficient graph representation learning.

\section*{Acknowledgement}
The research is supported by the National Key R\&D Program of China under grant No. 2022YFF0902500, the Guangdong Basic and Applied Basic Research Foundation, China (No. 2023A1515011050), Shenzhen Research Project (KJZD20231023094501003), Ant Group Research Intern Program.

\bibliography{iclr2024_conference}
\bibliographystyle{iclr2024_conference}
\clearpage
\appendix
\section{Theory and proof}
\label{appendix:proof}

To establish the expressiveness of the proposed model, we define an oracle GNN which performs standard message passing operations with input node feature being the raw features, with per-layer hidden dimension $\frac{d}{T}$. For simplicity, we assume the oracle GNN is instantiated by stacking $L$ GCN layers, with parameters $\theta_* = \{\wo{1}, \ldots, \wo{L}\}$. We additionally make two assumptions that are usually satisfied in graph modeling scenarios:
\begin{assumption}\label{ass: degree}
    The input graph $\mathcal{G}$ has its degree bounded from above by $D$.
\end{assumption}
\begin{assumption}\label{ass: op_norm}
    The operator norm of all the weight matrices of the oracle GNN model is bounded from above by $\nu$, i.e., $\sup_{l \in [L]}\opnorm{\wo{l}} \le \nu $
\end{assumption}
\begin{assumption}\label{ass: lower bound}
    The membrane potential at any time step is lower bounded by $V_\text{lb} < 0$ with $|V_\text{lb}| \le \kappa V_\text{th}$, with some $\kappa > 1$.
\end{assumption}
\begin{remark}
    Assumption \ref{ass: lower bound} is actually not necessary and is stated primarily for a sleeker presentation of results. In particular, given assumption \ref{ass: op_norm}, we are able to derive a lower bound that depends on the GNN matrices' operator norms, length $T$ as well as the characteristics of the underlying GNN.
\end{remark}
For some node $v$, denote $\widehat{z}_v = \frac{1}{T}\sum_{t=1}^T s_v$ as the \emph{firing rate} derived from the binaried representation learned by SNNs. The following theorem states that with a sufficiently large time window $T$, the proposed SNN can implement the oracle GNN with a small approximation error.
\begin{theorem}\label{thm: approx}
    Assume that assumptions \ref{ass: degree}-\ref{ass: lower bound} hold, for any input graph $\mathcal{G}$, let $z^*_v$ be the representation of node $v$ produced by some $L$-layer oracle GCN model. Under the IF model and reset by subtraction mechanism, there exists an SNN such that its firing rate $\widehat{z}_v$ has the following approximation property:
    \begin{align}\label{eqn: approx_gcn}
        \sup_{\mathcal{G}}\sup_{v \in \mathcal{V}}\left\|\widehat{z}_v - z^*_v\right\|_2 \le \dfrac{\sqrt{d} \kappa \left(\sqrt{1 + D} \nu\right)^L}{T\sqrt{T}}
    \end{align}
\end{theorem}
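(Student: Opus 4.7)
The plan is to prove Theorem~\ref{thm: approx} by matching an SNN of depth $L$ to the oracle GCN layer by layer and tracking how the approximation error accumulates through the stack. Concretely, I would construct an SNN whose weight matrices are taken to equal $\wo{1},\ldots,\wo{L}$ and whose IF neurons use threshold $V_{\text{th}}$ with reset-by-subtraction, taking the raw node features (not the partitioned groups of Section 4) as input at every time step. Define $\varepsilon^{(l)} := \sup_{v}\|\widehat{z}_v^{(l)} - z_v^{*(l)}\|_2$, the worst-node $\ell_2$ error between the firing rate at layer $l$ and the oracle representation; the goal is then to show that $\varepsilon^{(L)}$ satisfies \eqref{eqn: approx_gcn}.

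The core ingredient is a single-neuron conversion lemma stating that for an IF neuron with reset-by-subtraction driven by \emph{any} input stream $\{I^t\}_{t=1}^T$ with temporal average $\bar{I}$, the firing rate $\widehat{s}=\frac{1}{T}\sum_{t} s^t$ satisfies $|V_{\text{th}}\widehat{s}-\operatorname{ReLU}(\bar{I})|\le \kappa V_{\text{th}}/T$. This follows by telescoping the membrane update $V^t = V^{t-1}+I^t-V_{\text{th}} s^t$ over $t=1,\ldots,T$, which yields $V_{\text{th}}\widehat{s}=\bar{I}-V^T/T$. Assumption~\ref{ass: lower bound} bounds $|V^T|\le \kappa V_{\text{th}}$, which together with the nonnegativity $\widehat{s}\ge 0$ handles the $\bar{I}<0$ case and gives the ReLU approximation.

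Next I would propagate this single-neuron bound through the network inductively. At layer $l$, the temporal-average current into node $v$ equals $\sum_u \tilde{A}_{vu}\widehat{z}_u^{(l-1)}\mathbf{W}^{(l)\top}$, whereas the oracle computes $z_v^{*(l)}=\operatorname{ReLU}(\sum_u \tilde{A}_{vu} z_u^{*(l-1)}\mathbf{W}^{(l)\top})$. Splitting the error into (i) the per-coordinate conversion error aggregated across the $d/T$ output dimensions, contributing an additive $\sqrt{d/T}\cdot \kappa V_{\text{th}}/T$ in $\ell_2$, and (ii) the propagation of $\varepsilon^{(l-1)}$ through a row of $\tilde{A}$ having at most $1+D$ nonzeros (Assumption~\ref{ass: degree}, so row $\ell_2$ norm $\le\sqrt{1+D}$) composed with $\mathbf{W}^{(l)}$ of operator norm $\le\nu$ (Assumption~\ref{ass: op_norm}), combined with the $1$-Lipschitzness of $\operatorname{ReLU}$, yields
\[
\varepsilon^{(l)}\;\le\;\sqrt{1+D}\,\nu\,\varepsilon^{(l-1)}\;+\;\sqrt{d/T}\cdot\kappa V_{\text{th}}/T.
\]
Unrolling with $\varepsilon^{(0)}=0$ and bounding the geometric sum $\sum_{l=0}^{L-1}(\sqrt{1+D}\,\nu)^l$ by $(\sqrt{1+D}\,\nu)^L$ (absorbing $V_{\text{th}}$ into the overall scaling) produces the claimed $\sqrt{d}\,\kappa(\sqrt{1+D}\,\nu)^L/(T\sqrt{T})$.

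The main obstacle is establishing the per-neuron lemma under the multi-layer dynamics. Although the temporal average of the input into layer $l$ is close to a target value, the instantaneous currents still fluctuate by $O(1)$ because the previous layer emits only binary spikes. One has to ensure that these fluctuations cannot drive the membrane potential so deeply negative that the neuron stays silent even when its running average is barely positive, which would destroy the $1/T$ rate. Assumption~\ref{ass: lower bound} is precisely the lever that handles this, capping $|V^T|$ by $\kappa V_{\text{th}}$; without it one would need the separate argument alluded to in the remark after the assumption, bounding the residual via the GNN operator norms, depth, and $T$. Once this lemma is in place, the inductive propagation and the geometric-sum closing step are routine.
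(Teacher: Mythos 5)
Your proposal follows essentially the same route as the paper's proof: the telescoped membrane identity $V_\text{th}\widehat{s}=\bar I - V^T/T$ is exactly Lemma~\ref{lem: reduction}, and your single-neuron conversion bound, the ReLU Lipschitz step, the use of Assumptions~\ref{ass: degree}--\ref{ass: lower bound}, the layerwise error recursion, and the closing geometric sum all match the paper's remainder decomposition $\widehat{z}_v = z^*_v + \mathcal{R}^1+\cdots+\mathcal{R}^L$. The one detail to pin down is the construction: since downstream layers see the binary firing rates $\widehat{s}$ rather than $V_\text{th}\widehat{s}$, the weights at layers $l\ge 2$ must be rescaled to $V_\text{th}\wo{l}$ (and, if one uses the partitioned inputs of the actual architecture, the first-layer blocks scaled by $T$), exactly as the paper does; your phrase about ``absorbing $V_\text{th}$ into the overall scaling'' gestures at this but should be made explicit, since otherwise the firing rates converge to $z^*_v/V_\text{th}^L$ rather than $z^*_v$.
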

With a slight abuse of notation, hereafter we will denote $V_{(\cdot)}$ as either a scalar with a value equal to $V_{(\cdot)}$ or a vector of length $\frac{d}{T}$ with each coordinate being $V_{(\cdot)}$, the meaning shall be clear from the context. The proof relies on the following simple Lemma:
\begin{lemma}\label{lem: reduction}
    For any node $v$ and $1 \le l \le L$, let $V^l_v(t)$ and $H^l_v(t)$ be the membrane potential and input current at time step $ 1 \le t \le T$ and layer $l$, further let $N^l_v(T) = \sum_{t = 1}^T s_v(t)$ be the number of fired spikes throughout the $T$ steps. We have the following inequality:
    \begin{align}\label{eqn: reduction}
        V^l_v(T) = \sum_{t = 1}^T H_v^l(t) - N^l_v(T)V_\text{th}
    \end{align}
\end{lemma}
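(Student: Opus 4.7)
The plan is a simple telescoping argument that converts the three-step integrate-fire-reset description of the IF neuron into a single closed-form recursion for the membrane potential, and then sums it over time.

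First I would consolidate the dynamics from Equation~\ref{eq:snn_1} under the IF model and the reset-by-subtraction branch. After integration, $V^l_v(t) = V^l_v(t-1) + H^l_v(t)$; then the spike $s_v(t) = \Theta\bigl(V^l_v(t) - V_\text{th}\bigr) \in \{0,1\}$ is emitted; then the reset-by-subtraction update
\[
V^l_v(t) \leftarrow s_v(t)\bigl(V^l_v(t) - V_\text{th}\bigr) + (1 - s_v(t)) V^l_v(t)
\]
collapses to $V^l_v(t) \leftarrow V^l_v(t) - s_v(t) V_\text{th}$, because the Heaviside gate multiplies the subtraction term by exactly $1$ when a spike is fired and by $0$ otherwise. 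Composing the integration and gated-subtraction steps gives the clean one-step recursion
\[
V^l_v(t) = V^l_v(t-1) + H^l_v(t) - s_v(t) V_\text{th}.
\]

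Second, I would telescope this recursion from $t=1$ up to $t=T$ under the natural initialization $V^l_v(0) = 0$. Since $H^l_v(t)$ and the subtraction term both contribute additively at every step, the partial sums collapse to $V^l_v(T) = \sum_{t=1}^T H^l_v(t) - V_\text{th} \sum_{t=1}^T s_v(t)$, and substituting the definition $N^l_v(T) = \sum_{t=1}^T s_v(t)$ delivers Equation~\ref{eqn: reduction}. The identity is a per-coordinate statement, and since all operations involved (addition, Heaviside-gated subtraction) act componentwise along the $\frac{d}{T}$-dimensional hidden axis, no vector-valued subtlety enters; equivalently one can verify the identity by induction on $T$, where the inductive step is precisely the one-step recursion above.

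There is no serious obstacle here; the only point to handle carefully is the ordering of the three sub-steps, so that the Heaviside is evaluated on the post-integration, pre-reset potential and the reset then acts on that same potential. Once this ordering is fixed, the lemma reduces to a one-line accounting identity stating that the current membrane potential equals the total integrated input minus $V_\text{th}$ for every fired spike. This conservation-of-charge bookkeeping is exactly what the proof of Theorem~\ref{thm: approx} will exploit when relating the empirical firing rate $\widehat{z}_v = N^L_v(T)/T$ to the target GCN representation $z^*_v$.
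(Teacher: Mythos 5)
Your proof is correct and follows essentially the same route as the paper, which simply observes that each fired spike reduces the membrane potential by exactly $V_\text{th}$ and lets the telescoping be implicit. Your explicit one-step recursion and summation merely spell out the same conservation-of-charge accounting in full detail.
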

\begin{proof}[Proof of Lemma \ref{lem: reduction}]
    The Lemma follows from the trivial fact that each time upon firing, the membrane potential reduces by exactly $V_\text{th}$.
\end{proof}
\begin{proof}[Proof of theorem \ref{thm: approx}]
    We will first give the construction of the approximation SNN. Recall that the parameter configuration of an $L$-layer SNN is given by:
    \begin{description}
        \item[$1$st-layer] Since there are $T$ distinct GNN encoders, denote the corresponding weight matrices as $\w{1}_1, \ldots, \w{1}_T$.
        \item[$2$nd to $L$th layer (if exists)] We equip each layer with a single weight matrix $\w{l}, 2 \le l \le L$.
    \end{description}
    Now we construct the approximation SNN as follows: Given the oracle GNN with parameter $\theta_* = \{\wo{1}, \ldots, \wo{L}\}$. In the first layer, we partition $\w{1}_*$ row-wise into $T$ blocks, with each block having an identical number of rows if $d$ is divided by $T$, otherwise the first $T - 1$ blocks constructed with row $d // T$ with the last one having $d\ \text{mod}\ T$ rows, denote the resulting blocks $\w{1}_{*, 1}, \ldots, \w{1}_{*, T}$. Then we set $\w{1}_t = T\w{1}_{*, t}, 1\le t \le T$. In the subsequent layers, we set $\w{l} = V_{\text{th}}\wo{l}$.
    For notational simplicity, we define $\alpha_{uv} = \frac{1}{\sqrt{\left(\left|\mathcal{N}_u\right|+1\right) \cdot \left(\mid \mathcal{N}_{v }|+1\right)}}$ for some node pair $(u, v)$. Now fix some arbitrary node $v \in V$. Recall the aggregation equation at the $l$-th layer of the SNN, with time step $t$:
    \begin{align}\label{eqn: snn_gcn}
        H_v^l(t) =
        \begin{cases}
             & \sum_{u \in \mathcal{N}_v \cup \{v\}} a_{uv}\w{l} s_v^{l - 1}(t), \quad\ \text{if $2 \le l \le L$} \\
             & \sum_{u \in \mathcal{N}_v \cup \{v\}} a_{uv}\w{1}_t x_v^t, \quad\qquad \text{if $l = 1$}
        \end{cases}.
    \end{align}
    Here note that in the first layer, the pre-activation output of the oracle GNN is $\frac{1}{T}\sum_{t=1}^T H_v^1(t) = \frac{1}{T}\sum_{t=1}^T s_v^1(t)$ for any node $v$. Now we average Eq.~\ref{eqn: snn_gcn} at the $L$-th (final) layer according to all timesteps, yielding
    \begin{align}
        \frac{1}{T}\sum_{t = 1}^T H_v^L(t) & = \frac{1}{T}\sum_{u \in \mathcal{N}_v \cup \{v\}} a_{uv}V_\text{th}\wo{L} \sum_{t = 1}^T s_u^{L - 1}(t) \\
                                           & =\sum_{u \in \mathcal{N}_v \cup \{v\}} a_{uv}V_\text{th}\wo{L} \widehat{z}_v^{L -1}.
    \end{align}
    Now use Lemma \ref{lem: reduction} at layer $L$:
    \begin{align}
        \widehat{z}_v = \widehat{z}^L_v = \dfrac{N^L_v(T)}{T} & = \dfrac{1}{V_\text{th}}\left(\frac{1}{T}\sum_{t = 1}^T H_v^L(t) - \dfrac{V^L_v(T)}{T}\right)                                                                  \\
                                                              & =\sum_{u \in \mathcal{N}_v \cup \{v\}} a_{uv}\wo{L} \widehat{z}_u^{L -1} - \frac{V_v^L(T)}{T V_\text{th}}                                                                         \\
                                                              & =\underbrace{\sum_{u \in \mathcal{N}_v \cup \{v\}} a_{uv}\wo{L} \relu\left(\widehat{z}_u^{L -1}\right)}_{\mathcal{T}_1} - \frac{V_v^L(T)}{T V_\text{th}},\label{eqn: recur_first}
    \end{align}
    where the last equality follows since the firing rates are by definition non-negative. Next we further analyze $\mathcal{T}$, apply Lemma \ref{lem: reduction} at layer $L - 1$ yields:
    \begin{align}
        \mathcal{T}_1 & = \sum_{u \in \mathcal{N}_v \cup \{v\}} a_{uv}\wo{L} \relu\left(\sum_{\omega \in \mathcal{N}_u \cup \{u\}} a_{u\omega}\wo{L - 1} \widehat{z}_\omega^{L -2} - \frac{V_u^{L -1}(T)}{T V_\text{th}}\right)                                                                                                         \\
                      & = \sum_{u \in \mathcal{N}_v \cup \{v\}}\left[ a_{uv}\wo{L} \relu\left(\sum_{\omega \in \mathcal{N}_u \cup \{u\}} a_{u\omega}\wo{L - 1} \widehat{z}_\omega^{L -2}\right) +   a_{uv}\wo{L}\Upsilon^{L - 1}\right]                                                                                                 \\
                      & = \sum_{u \in \mathcal{N}_v \cup \{v\}} a_{uv}\wo{L} \relu\left(\sum_{\omega \in \mathcal{N}_u \cup \{u\}} a_{u\omega}\wo{L - 1} \relu\left(\widehat{z}_\omega^{L -2}\right)\right) +  \underbrace{\sum_{u \in \mathcal{N}_v \cup \{v\}} a_{uv}\wo{L}\Upsilon^{L - 1}}_{\mathcal{R}^{L - 1}} \label{eqn: recur}
    \end{align}
    where in the remainder term  $\mathcal{R}^{L - 1}$, we define
    \begin{align}
        \Upsilon^{L - 1} = \relu\left(\sum_{\omega \in \mathcal{N}_u \cup \{u\}} a_{u\omega}\wo{L - 1} \widehat{z}_\omega^{L -2} - \frac{V_u^{L -1}(T)}{T V_\text{th}}\right) - \relu\left(\sum_{\omega \in \mathcal{N}_u \cup \{u\}} a_{u\omega}\wo{L - 1} \widehat{z}_\omega^{L -2}\right)
    \end{align}
    We will first obtain an upper bound of the $\ell_2$-norm of $\Upsilon^{L - 1}$, specifically, using the inequality  $ |\relu(x + y) - \relu(x)| \le |y|$,
    \begin{align}
        \left\| \Upsilon^{L - 1} \right\|_2 \le \left\| \frac{V_u^{L -1}(T)}{T V_\text{th}} \right\|_2 = \dfrac{\left\|V_u^{L -1}(T)\right\|_2}{T V_\text{th}} \le \sqrt{\frac{d}{T}} \dfrac{\kappa}{T}
    \end{align}
    where the last inequality follows from the fact the membrane potential never exceeds $V_\text{th}$ and assumption \ref{ass: lower bound}. Next we bound the term $\mathcal{R}^{L - 1}$ under the GCN model, we further define $\text{deg}(v) = \left|\mathcal{N}_v\right|$. We have:
    \begin{align}
        \left\| \mathcal{R}^{L - 1} \right\|_2 & \le \sum_{u \in \mathcal{N}_v \cup \{v\}}
        \dfrac{1}{\sqrt{(1 + \text{deg}(u))(1 + \text{deg}(v))}}\left\|\wo{L}\Upsilon^{L - 1}\right\|_2     \\
                                               & \le \sum_{u \in \mathcal{N}_v \cup \{v\}}
        \dfrac{1}{\sqrt{(1 + \text{deg}(u))(1 + \text{deg}(v))}} \nu \sqrt{\frac{d}{T}} \dfrac{\kappa}{T}   \\
                                               & \le \sqrt{1 + D} \nu \sqrt{\frac{d}{T}} \dfrac{\kappa}{T},
    \end{align}
    where the second inequality follows by assumption \ref{ass: op_norm}, and the last inequality follows by the bounded degree assumption \ref{ass: degree} and $\text{deg}(u) \ge 0$. Now we further unravel the right hand side of Eq.~\ref{eqn: recur} untill there are altogether $L - 1$ remainders. It is straightforward to check that the leftmost term would be $z^*_v$, combining with Eq.~\ref{eqn: recur_first}, we write
    \begin{align}
        \widehat{z}_v = z^*_v + \mathcal{R}^1 + \cdots + \mathcal{R}^{L - 1} + \mathcal{R}^L,
    \end{align}
    with $\mathcal{R}^L = - \frac{V_v^L(T)}{T V_\text{th}}$ being the remainder term in Eq.~\ref{eqn: recur_first}. Following similar arguments, we can show that the remainder terms satisfy:
    \begin{align}
        \left\| \mathcal{R}^{l} \right\|_2 \le \left(\sqrt{1 + D} \nu \right)^{L - l} \sqrt{\frac{d}{T}} \dfrac{\kappa}{T},\qquad 1 \le l \le L,
    \end{align}
    Consequently we have:
    \begin{align}
        \left\|\widehat{z}_v - z^*_v\right\|_2 & \le \left\| \mathcal{R}^1\right\|_2 + \cdots + \left\| \mathcal{R}^L\right\|_2                              \\
                                               & \le \sqrt{\frac{d}{T}} \dfrac{\kappa}{T} \left( 1 + \cdots + \left(\sqrt{1 + D} \nu \right)^{L - 1} \right) \\
                                               & \le \dfrac{\sqrt{d} \kappa \left(\sqrt{1 + D} \nu\right)^L}{\sqrt{T} T}
    \end{align}
\end{proof}
\begin{remark}
    The approximation bound Eq.~\ref{eqn: approx_gcn} is dependent on the specific form of GCN, and therefore has the scaling term $O\left((\nu \sqrt{1 + D})^L\right)$ which is exponential in $L$. With an alternative underlying GNN model, we may get better approximations, for example in the SAGE model \cite{hamilton2017inductive}, following similar arguments, we may prove that we may remove the exponential dependence over (root)-degree bounds, i.e., the scaling factor reduces to $O\left(L \nu^L\right)$.
\end{remark}
\begin{remark}
    \label{remark:reset}
    Theorem \ref{thm: approx} relies on the reset by subtraction mechanism, for the reset to constant (zero) mechanism, as stated in \cite{ann2snn}, we have to sacrifice an error term that does not vanish as $T$ grows even in the i.i.d setting.
\end{remark}

\section{Vanishing gradients in surrogate learning}
\label{appendix:vanishing}

The vanishing gradient problem is a well-known issue that hinders convergence in modern neural networks. To address this problem, ReLU activation functions were introduced as alternatives to Sigmoid and Tanh to avoid dead neurons during backpropagation. However, in SNNs, the vanishing gradient problem can arise again when using a layer of spiking neurons to learn long sequences, similar to the vanishing problem in deep artificial neural networks. We attribute the vanishing gradient problem in SNNs to the surrogate gradient learning technique, wherein surrogate functions (typically smooth Sigmoid or Tanh) are used to approximate the gradient of the Heaviside step function in spiking neurons. While surrogate learning plays a crucial role in directly training SNNs, it inherits the defects of surrogate functions, resulting in poor training performance and slow convergence. Recent works, such as~\cite{FangYCHMT21,FengLTM022}, have also noted this issue in SNNs and reached the same conclusion.

\begin{wrapfigure}{r}{{0.3\linewidth}}
    \centering
    \includegraphics[width=\linewidth]{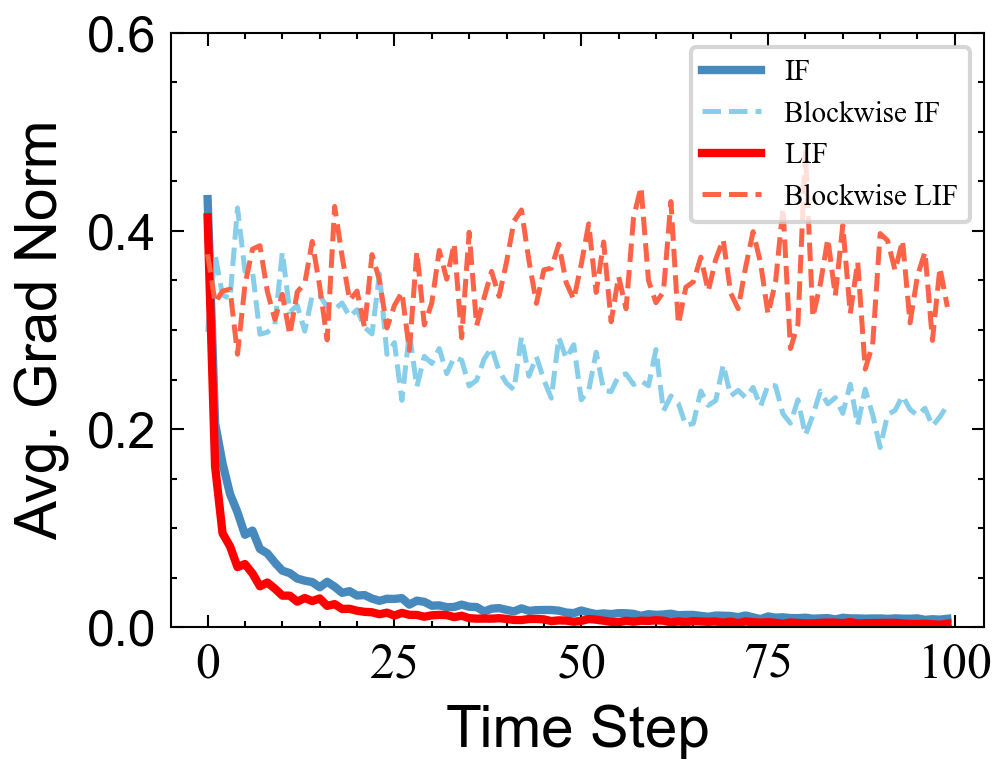}
    \vspace{-5mm}
    \caption{Average gradient norms of IF and LIF neurons, with and without the blockwise learning strategy.}
    \label{fig:vanishing}
\end{wrapfigure}
Addressing the vanishing gradient problem is essential for unlocking the full potential of deep SNNs and enabling them to learn and generalize effectively. Recent success in local learning~\cite{blockssl} has pointed out a new way to train deep modern networks without suffering severe network degradation and vanishing gradients problems. Specifically, the network is divided into several individual blocks that are trained locally with gradient isolation during training to avoid a long backpropagation path. Inspired by this approach, we are motivated to explore an interesting research direction in SNNs:
\textit{Is it possible to address the issue of vanishing gradients by adjusting the backpropagation path in a reasonable way?}

In this work, we present a blockwise training strategy to address the vanishing gradient problem in SNNs. Unlike conventional end-to-end learning, we horizontally divide the encoder network into several individual blocks along the time dimension\footnote{Our work differs from \cite{blockssl}, in which the network is vertically divided from bottom to top.}. We limit the length of the backpropagation path to each of these blocks and apply a self-supervised contrastive loss locally at different blocks, using \textit{stop-gradient} to ensure that the loss function from one block does not affect the learned parameters of other blocks.
We argue that the blockwise learning rule is unlikely to cause vanishing gradients as long as each block has a reasonable depth. We plot the averaged gradient norm of IF and LIF in Figure~\ref{fig:vanishing} and observe that signals in both IF and LIF neurons are prone to vanishing with an increasing time step. However, the backward gradients exhibit healthy norms in IF and LIF, and the vanishing gradient problem is largely alleviated by incorporating our blockwise optimization paradigm.

\section{Algorithm}
\begin{algorithm}[h]
    \caption{Spiking Graph Contrastive Learning (\ours)}
    \label{algo:spikegcl}
    \begin{algorithmic}[0]
        \Require Graph $\mathcal{G}=(\mathcal{V}, \mathcal{E}, \mathbf{X})$, encoder $f_\theta(\cdot)=[f_{\theta_1}^1,\ldots,f_{\theta_T}^T]$, predictor head $g_\phi (\cdot)$, time step $T$;
        \Ensure Learned encoder $f_\theta(\cdot)$;
    \end{algorithmic}
    \begin{algorithmic}[1]
        \While{\textit{not converged}};
        \State $\tilde{\mathcal{G}}=(\mathcal{V}, \mathcal{E}, \tilde{\mathbf{X}}) \leftarrow \text{corruption}(\mathcal{G})$;
        \For{$t = 1$ to $T$}
        \State $\mathbf{Z_1} \leftarrow f_{\theta_t}^t(\mathcal{G}^t)$;
        \State $\mathbf{H_1} \leftarrow g_\phi(\mathbf{Z_1})$;
        \State $\mathbf{Z_2} \leftarrow f_{\theta_t}^t(\tilde{\mathcal{G}^t}) $;
        \State $\mathbf{H_2} \leftarrow g_\phi(\mathbf{Z_2})$;
        \State Calculate contrastive loss $\mathcal{J}$ over $\mathbf{H_1}, \mathbf{H_2}$;
        \State Update $\theta_t, \phi$ by gradient descent;
        \EndFor
        \EndWhile;\\
        \Return $f_\theta(\cdot)$;
    \end{algorithmic}
\end{algorithm}

\begin{algorithm}[H]
    \caption{PyTorch-style pseudocode for \ours}
    \label{algo:pseudo_spikegcl}
    \begin{algorithmic}[0]
        \State \PyComment{f:\ a set of peer GNNs}
        \State \PyComment{h:\ SNN neuron}
        \State \PyComment{g:\ MLP network}
        \State \PyComment{m:\ margin}
        \State \PyComment{perm:\ random permutation of node features}
        \State \PyComment{T:\ time steps}
        \State \PyComment{G:\ input graph}
        \State \PyComment{x:\ node features}
        \State
        \State \PyCode{x1 = torch.\textcolor{pink}{chunk}(x, T, dim=1) }
        \State \PyCode{G1 = G}
        \State \PyComment{corruption}
        \State \PyCode{x2 = torch.\textcolor{pink}{chunk}(x[:, perm)], T, dim=1) }
        \State \PyCode{G2 = drop\_edge(G)}
        \State \PyCode{for t in range(T):}
        \State \PyCode{\quad\quad z1 = f[t](x1[t], G1)}
        \State \quad\quad\PyComment{gradient isolation}
        \State \PyCode{\quad\quad z1 = h(z1.\textcolor{pink}{detach())}}
        \State \PyCode{\quad\quad z1 = g(z1)}
        \State
        \State \PyCode{\quad\quad z2 = f(x2[t], G2)}
        \State \quad\quad\PyComment{gradient isolation}
        \State \PyCode{\quad\quad z2 = h(z2.\textcolor{pink}{detach())}}
        \State \PyCode{\quad\quad z2 = g(z2)}
        \State \quad\quad\PyComment{compute loss \& backpropagation}
        \State \PyCode{\quad\quad loss = F.\textcolor{pink}{margin\_ranking\_loss}(z1, z2, margin=m)}
        \State \PyCode{\quad\quad loss.\textcolor{pink}{backward()}}
    \end{algorithmic}
\end{algorithm}
\section{Discussion}

\subsection{Discussion on complexity}
As \ours uses a relatively simple predictor head (a single-layer MLP), the main complexity bottleneck lies in the GNN-based encoder network. Therefore, we discuss the overall computation and parameter complexity of \ours from the perspective of the encoder.
Recall that our encoder consists of $T$ peer GNNs coupled with a spiking neuron. The computation and parameter complexity of our method is generally $T$ times larger than standard GCL methods that use a single GNN as an encoder. Here we omit the bias term in GNN and learnable parameters (if have) in spiking neurons as they do not significantly affect the computation and parameter complexity.
Note that we divide the input features into $T$ groups and share parameters across different GNNs. Each GNN actually has only $\frac{1}{T}$ size of parameters with a hidden dimension of $\frac{d}{T}$ compared to the standard case.
Therefore, our method achieves similar computation and parameter complexity as traditional GCL methods that use a single GNN as an encoder.
Since each GNN accepts each group of features as input and performs message aggregation individually, the computation can be trivially parallelized, which accelerates computations on larger graphs. This further reduces the complexity of our method and endows it with desirable scalability not only to larger dataset sizes but also to larger embedding dimensions.

\subsection{Limitation}
We note certain limitations of our work. (i) First, the linear evaluation used in our experiments assumes that the downstream task can be solved by a linear classifier, which may not always be the case. In addition, the quality of the learned graph representations may depend on the specific downstream task of interest, and linear evaluation may not be able to capture all relevant aspects of the learned representations for the task.
(ii) Second, one common assumption behind the theoretical guarantees is that the neuron adopts a `reset by subtraction' mechanism, which may not always hold for spiking neurons that use the `reset to zero' mechanism, as also mentioned in Remark~\ref{remark:reset}.
(iii) Third, our empirical evaluations on real-world datasets do not include any significantly large-scale datasets (N $\sim$ $10^6$ or higher), although our collection of 9 datasets is a broad selection among those commonly used in related research.
These aspects mark potential areas for future enhancements and investigations

\section{Detailed experimental settings}
\label{appendix:settings}


\begin{table}[h]
    \centering
    \caption{Dataset statistics.}\label{tab:dataset}
    \begin{threeparttable}
        \resizebox{\linewidth}{!}
        {\begin{tabular}{l|l l l l l l l ll}
                \toprule
                                    & \textbf{Computers} & \textbf{Photo} & \textbf{CS} & \textbf{Physics} & \textbf{Cora} & \textbf{CiteSeer} & \textbf{PubMed} & \textbf{arXiv} & \textbf{MAG} \\
                \midrule
                \textbf{\#Nodes}    & 13,752             & 7,650          & 18,333      & 34,493           & 2,708         & 3,327             & 19,717          & 16,9343        & 736,389      \\
                \textbf{\#Edges}    & 491,722            & 238,162        & 163,788     & 495,924          & 10,556        & 9,104             & 88,648          & 2,315,598      & 10,792,672   \\
                \textbf{\#Features} & 767                & 745            & 6,805       & 8,415            & 1,433         & 3,703             & 500             & 128            & 128          \\
                \textbf{\#Classes}  & 10                 & 8              & 15          & 5                & 7             & 6                 & 3               & 40             & 349          \\
                \textbf{Density}    & 0.144\%            & 0.082\%        & 0.023\%     & 0.407\%          & 0.260\%       & 0.049\%           & 0.042\%         & 0.008\%        & 0.002\%      \\
                \bottomrule
            \end{tabular}
        }
    \end{threeparttable}
\end{table}

\nosection{Datasets.}
We evaluate \ours on several graph benchmarks with different scales and properties. Following prior works~\cite{cca_ssg,bgrl,ijcai2022p338}, we adopt 9 common benchmark graphs, including two co-purchase graphs, i.e., Amazon-Photo, Amazon-Computer~\cite{shchur2018pitfalls}, two co-author graphs, i.e., Coauthor-CS and Coauthor-Physics~\cite{shchur2018pitfalls}, three citation graphs, i.e., Cora, CiteSeer, PubMed~\cite{sen2008collective}, as well as two large-scale datasets ogbn-arXiv and ogbn-MAG from Open Graph Benchmark~\cite{hu2020ogb}.
Dataset statistics are listed in Table~\ref{tab:dataset}.
For three citation datasets, we evaluate the models on the public \textit{full} splits introduced in \cite{dropedge,fastgcn}.
we adopt the random 1:1:8 split for Amazon-Computer, Amazon-Photo, Coauthor-CS, and Coauthor-Physics. We use stratified sampling to ensure that the class distribution remains the same across splits.

\nosection{Baselines.}
We compare our proposed methods to a wide range of baselines that fall into four categories:
\begin{itemize}
    \item \textbf{Full-precision (supervised) GNNs}: GCN~\cite{kipf2016semi} and GAT~\cite{gat}.
    \item \textbf{1-bit quantization-based GNNs}: Bi-GCN~\cite{bigcn}, BinaryGNN~\cite{binary_gnn} and BANE~\cite{bane}. Bi-GCN and BinaryGNN are both supervised methods while BANE is an unsupervised one.
    \item \textbf{Full-precision contrastive methods}: DGI~\cite{dgi}, GRACE~\cite{grace}, CCA-SSG~\cite{cca_ssg}, BGRL~\cite{bgrl}, SUGRL~\cite{sugrl}, and GGD~\cite{ggd}. DGI, GRACE, SUGRL, and GGD are negative-sampling-based methods, whereas BGRL and CCA-SSG are negative-sample-free ones.
    \item \textbf{Graph SNNs}: SpikingGCN~\cite{ijcai2022p338}, SpikeNet~\cite{li2023scaling}, GC-SNN and GA-SNN~\cite{ijcai2021p441}. Note that SpikeNet is initially designed for dynamic graphs, we adapt the author's implementation to static graphs following the practice in \cite{ijcai2022p338,ijcai2021p441}. Although these methods are also based on the combination of SNNs and GNNs, they do not fully utilize the binary nature of SNNs to learn binaried representations in an unsupervised fashion.
\end{itemize}
The hyperparameters of all the baselines were configured according to the experimental settings officially reported by the authors and were then carefully tuned in our experiments to achieve their best results.

\nosection{Implementation details.}
We present implementation details for our experiments for reproducibility.
We implement our model as well as the baselines with PyTorch~\cite{pytorch} and PyTorch Geometric~\cite{pyg}, which are open-source software released under BSD-style \footnote{\url{https://github.com/pytorch/pytorch/blob/master/LICENSE}} and MIT\footnote{\url{https://github.com/pyg-team/pytorch_geometric/blob/master/LICENSE}} license, respectively. All datasets used throughout experiments are publicly available in PyTorch Geometric library. All experiments are conducted on an NVIDIA RTX 3090 Ti GPU with 24 GB memory unless specified.
Code for reproducibility is available in the Supplementary Material accompanying the submission.

\nosection{Hyperparameter settings.}
The embedding dimension for each time step is searched in \{4, 8, 16, 32, 64\}.
We carefully tune the time step $T$ from 8 to 64 to better approximate the full-precision performance. The firing threshold is tuned within \{5e-4, 5e-3, 5e-2, 5e-1, 1.0\}.
The margin $m$ is tuned in $\{0, 0.5, 1.0, 1.5, 2.0\}$.
We initialize and optimize all models with default normal initializer and AdamW optimizer~\cite{AdamW}.
For all datasets, we follow the practice in \cite{li2023scaling} and increase the sparsity of the network using PLIF~\cite{PLIF} units, a bio-inspired neuron model that adopts a learnable $\tau_m=1$ to activate sparsely in time solely when crossing a threshold.
We adopt Sigmoid as the surrogate function during backpropagation, with a smooth factor $\alpha=2.0$.
Exploration on parameters of \ours including time step $T$, encoder architecture, spiking neurons, and threshold $V_\text{th}$ are elaborated in \autoref{appendix:exp}.

\nosection{Evaluation protocol.}
For unsupervised or self-supervised methods, including \ours, we follow the \textit{linear evaluation} scheme as described in literature~\cite{dgi,bgrl}: (i) we first train the model on all the nodes in a graph with self-defined supervision to learn node representations; (ii) we then train a linear classifier (e.g., a logistic regression model) on top with the learned representations under the supervised setting to evaluate the performance. We report averaged accuracy with standard deviation across 10 different trials with random seeds.

\nosection{Computation of theoretical energy consumption.}
Different from common ANNs implemented on GPUs, SNNs are designed for neuromorphic chips that adopt synaptic operations (SOPs) to run neural networks in a low power consumption manner. However, training SNNs directly on neuromorphic chips has been rarely explored in the literature.
To investigate the energy consumption of SNN-based methods on neuromorphic chips, we follow previous works~\cite{ijcai2022p338,spikformer} and adopt an alternative estimation approach, which involves counting the total number of spikes generated during the embedding generation process.
The computation of energy consumption involves two parts. The first one comes from the spike encoding process, which converts the full-precision node representations into discrete spikes to run on neuromorphic chips (e.g., Bernoulli encoding). It is estimated by multiply-and-accumulate (MAC) operations. The second one is the spiking process, which receives the resultant current and accumulates the membrane potential to generate the spikes. Therefore, the theoretical energy consumption can be estimated as follows:
\begin{equation}
    \begin{aligned}
        E & =  E_\text{encoding} + E_\text{spiking}                                                          \\
          & = E_\text{MAC} \sum_{t=1}^{T}N d + E_\text{SOP} \sum_{t=1}^{T}\sum_{l=1}^{L} \mathbf{S}_{t}^{l},
    \end{aligned}
\end{equation}
where $\mathbf{S}_{t}^{l}$ denotes the output spikes at time step $t$ and layer $l$. Note that \ours has only one spiking neuron, so we count the number of spikes at the last layer. According to \cite{ijcai2022p338, spikformer}, $E_\text{MAC}$ and $E_\text{SOP}$ are set to $4.6pJ$ and $3.7pJ$, respectively.
We also detail how to calculate the theoretical energy consumption of methods other than graph SNNs. For 1-bit GNNs, the processing time required to execute a single cycle operation, encompassing one multiplication and one addition, can be utilized to perform 64 binary operations effectively~\cite{xnornet}. Therefore, we use the following formula derived from \cite{bigcn} to calculate the theoretical energy consumption of 1-bit GNNs (specifically GCN):
\begin{equation}
    E = E_\text{MAC}(\frac{1}{64}Nd_{in}^l d_{out}^l + 2Nd_{out}^l + |\mathcal{E}|d_{out}^{l}),
\end{equation}
where $d_{in}^l$ and $d_{out}^l$ are the input and output dimensions of the representation at the $l$-th layer. We assume $d_{in}^l=d_{out}^l=d$ in this paper.
For the other full-precision models deployed on GPUs, we count the number of MAC operations to calculate the theoretical energy consumption. In practice, we split the message passing into two steps: embedding generation and aggregation. In the embedding generation step, the vanilla GNN projects features into a low-dimensional embedding space. For example, given the node features $\mathbf{X}$ and a weight matrix $\mathbf{W} \in \mathbb{R}^{d_{in} \times d_{out}}$, it executes $Nd_{in}d_{out}$ multiplication and $Nd_{in}d_{out}$ addition operations. In the aggregation step, the number of multiplication and addition operations can be simply considered as $|E|d_{in}$ and $|E|d_{out}$.
For a fair comparison, we only consider the energy consumption of the encoder in GCL, as it is often the main bottleneck.

\section{Additional empirical results}
\label{appendix:exp}

\begin{table}[t]

    \centering
    \caption{Classification accuracy (\%) on three citation datasets. The best result for each dataset is highlighted in \textbf{\color{red} red}. Darker colors indicate larger performance gaps between \ours and the best results. (\textbf{U}: unsupervised or self-supervised; \textbf{S}: spike-based; \textbf{B}: binarized)}\label{tab:node_clas_citation}
    \begin{tabular}{l| ccc | c c c}
        \toprule
        \textbf{Methods}                & \textbf{U}           & \textbf{S}           & \textbf{B}           & \textbf{Cora}                       & \textbf{CiteSeer}                   & \textbf{PubMed}                     \\
        \midrule
        GCN~\cite{kipf2016semi}         &                      &                      &                      & 86.1$_{\pm0.2}$                     & 75.9$_{\pm0.4}$                     & 88.2$_{\pm0.5}$                     \\
        GAT~\cite{gat}                  &                      &                      &                      & 86.7$_{\pm0.7}$                     & {78.5$_{\pm0.4}$}                   & 86.8$_{\pm0.3}$                     \\
        \midrule
        GC-SNN~\cite{ijcai2021p441}     &                      & {\color{pink}\cmark} &                      & 83.8$_{\pm0.4}$                     & 73.4$_{\pm0.5}$                     & 85.3$_{\pm0.4}$                     \\
        GA-SNN~\cite{ijcai2021p441}     &                      & {\color{pink}\cmark} &                      & 86.4$_{\pm0.2}$                     & 72.8$_{\pm0.8}$                     & 84.5$_{\pm0.6}$                     \\
        SpikeNet~\cite{li2023scaling}   &                      & {\color{pink}\cmark} &                      & 83.5$_{\pm0.3}$                     & 71.4$_{\pm0.5}$                     & 83.9$_{\pm0.4}$                     \\
        SpikingGCN~\cite{ijcai2022p338} &                      & {\color{pink}\cmark} &                      & 87.7$_{\pm0.4}$                     & 77.7$_{\pm0.3}$                     & 87.5$_{\pm0.5}$                     \\
        \midrule
        Bi-GCN~\cite{bigcn}             &                      &                      & {\color{pink}\cmark} & 85.1$_{\pm0.6}$                     & 72.8$_{\pm0.7}$                     & 88.6$_{\pm1.0}$                     \\
        Binary GNN~\cite{binary_gnn}    &                      &                      & {\color{pink}\cmark} & 85.0$_{\pm0.9}$                     & 71.9$_{\pm0.2}$                     & 86.8$_{\pm0.7}$                     \\
        BANE~\cite{bane}                & {\color{pink}\cmark} &                      & {\color{pink}\cmark} & 65.0$_{\pm0.4}$                     & 64.3$_{\pm0.3}$                     & 68.8$_{\pm0.2}$                     \\
        \midrule
        DGI~\cite{dgi}                  & {\color{pink}\cmark} &                      &                      & 86.3$_{\pm0.2}$                     & \textbf{\color{red}78.9$_{\pm0.2}$} & 86.2$_{\pm0.1}$                     \\
        GRACE~\cite{grace}              & {\color{pink}\cmark} &                      &                      & 87.2$_{\pm0.2}$                     & 74.5$_{\pm0.1}$                     & 87.3$_{\pm0.1}$                     \\
        CCA-SSG~\cite{cca_ssg}          & {\color{pink}\cmark} &                      &                      & 84.6$_{\pm0.7}$                     & 75.4$_{\pm1.0}$                     & 88.4$_{\pm0.6}$                     \\
        BGRL~\cite{bgrl}                & {\color{pink}\cmark} &                      &                      & 87.3$_{\pm0.1}$                     & 76.0$_{\pm0.2}$                     & 88.3$_{\pm0.1}$                     \\
        SUGRL~\cite{sugrl}              & {\color{pink}\cmark} &                      &                      & \textbf{\color{red}88.0$_{\pm0.1}$} & 77.6$_{\pm0.4}$                     & 88.2$_{\pm0.2}$                     \\
        GGD~\cite{ggd}                  & {\color{pink}\cmark} &                      &                      & 86.2$_{\pm0.2}$                     & 75.5$_{\pm0.1}$                     & 84.2$_{\pm0.1}$                     \\
        \midrule
        \ours                           & {\color{pink}\cmark} & {\color{pink}\cmark} & {\color{pink}\cmark} & {87.4$_{\pm0.6}$}      & {77.6$_{\pm0.6}$}      & \textbf{\color{red}88.8$_{\pm0.3}$} \\
        \bottomrule
    \end{tabular}
\end{table}

\nosection{Performance comparison.}
We show additional results of \ours and baselines on three common citation benchmarks: Cora, CiteSeer, and PubMed.
We can observe from Table~\ref{tab:node_clas_citation} that \ours obtains competitive performance compared to state-of-the-art baselines across all datasets, including those using full-precision or binarized representations. This highlights that \ours is a versatile and effective method for learning node representations in citation networks. It is worth noting that \ours achieves these results with significantly fewer bits compared to full-precision representations, which demonstrates its efficiency and scalability. Additionally, \ours does not require end-to-end supervision, which makes it more practical for real-world applications.

\begin{table}[t]
    \centering
    \caption{The parameter size (KB) and theoretical energy consumption (mJ) of various methods.}\label{tab:efficiency_full}
    \resizebox{\linewidth}{!}
    {
        \begin{threeparttable}
            \begin{tabular}{l| c c c c c c c c c c c c}
                \toprule
                \multirow{2}{*}{}        & \multicolumn{2}{c}{\textbf{Computers}} & \multicolumn{2}{c}{\textbf{Photo}} & \multicolumn{2}{c}{\textbf{CS}} & \multicolumn{2}{c}{\textbf{Physics}} & \multicolumn{2}{c}{\textbf{arXiv}} & \multicolumn{2}{c}{\textbf{MAG}}                                                                                                                                                                                      \\
                \cmidrule(lr){2-3} \cmidrule(lr){4-5} \cmidrule(lr){6-7} \cmidrule(lr){8-9} \cmidrule(lr){10-11} \cmidrule(lr){12-13}
                                                        & \#Param$\downarrow$                    & Energy$\downarrow$                 & \#Param$\downarrow$             & Energy$\downarrow$                   & \#Param$\downarrow$                & Energy$\downarrow$               & \#Param$\downarrow$          & Energy$\downarrow$          & \#Param$\downarrow$         & Energy$\downarrow$         & \#Param$\downarrow$         & Energy$\downarrow$           \\
                \midrule
                GC-SNN                                  & 195.9                                  & 0.53                               & 258.9                           & 0.29                                 & 1812.2                             & 0.16                             & 2221.6                       & 0.11                        & -                           & -                          & -                           & -                            \\
                GA-SNN                                  & 265.0                                  & 0.53                               & 258.8                           & 0.26                                 & 1812.2                             & 0.17                             & 1094.4                       & 0.11                        & -                           & -                          & -                           & -                            \\
                SpikeNet                                & 293.6                                  & 0.71                               & 484.1                           & 0.62                                 & 1790.2                             & 0.46                             & 2197.1                       & 0.23                        & 194.0                       & 0.41                       & -                           & -                            \\
                SpikingGCN                              & 7.6                                    & 0.68                               & 5.9                             & 0.36                                 & 102.0                              & 0.20                             & 42.0                         & 0.21                        & -                           & -                          & -                           & -                            \\
                \colorbox{lightgray}{Average}$^\dagger$ & \colorbox{lightgray}{190.5}            & \colorbox{lightgray}{0.61}         & \colorbox{lightgray}{251.9}     & \colorbox{lightgray}{0.38}           & \colorbox{lightgray}{1388.7}       & \colorbox{lightgray}{0.24}       & \colorbox{lightgray}{1388.7} & \colorbox{lightgray}{0.16}  & \colorbox{lightgray}{194.0} & \colorbox{lightgray}{0.41} & -                           & -                            \\
                \midrule
                Bi-GCN                                  & 529.9                                  & 0.67                               & 517.6                           & 0.33                                 & 3623.9                             & 0.83                             & 4443.4                       & 2                           & 218.1                       & 2                          & 376.6                       & 12.3                         \\
                BinaryGNN                               & 108.4                                  & 1                                  & 105.3                           & 0.57                                 & 881.9                              & 3                                & 1086.7                       & 11                          & 30.5                        & 1                          & -                           & -                            \\
                \colorbox{lightgray}{Average}$^\dagger$ & \colorbox{lightgray}{319.1}            & \colorbox{lightgray}{0.93}         & \colorbox{lightgray}{311.4}     & \colorbox{lightgray}{0.44}           & \colorbox{lightgray}{2252.9}       & \colorbox{lightgray}{1.86}       & \colorbox{lightgray}{2765.0} & \colorbox{lightgray}{6.3}   & \colorbox{lightgray}{124.3} & \colorbox{lightgray}{1.50} & \colorbox{lightgray}{376.6} & \colorbox{lightgray}{12.3}   \\
                \midrule
                DGI                                     & 917.5                                  & 0.5                                & 906.2                           & 0.21                                 & 4008.9                             & 8                                & 4833.3                       & 6                           & 590.3                       & 5                          & 590.3                       & 568                          \\
                GRACE                                   & 656.1                                  & 1.1                                & 644.8                           & 0.47                                 & 3747.5                             & 17                               & 4571.9                       & 13                          & 328.9                       & 21                         & 328.9                       & 4463                         \\
                CCA-SSG                                 & 262.4                                  & 17                                 & 256.7                           & 9                                    & 1808.1                             & 152                              & 2220.2                       & 352                         & 98.8                        & 78                         & 98.8                        & 340                          \\
                BGRL                                    & 658.4                                  & 25                                 & 647.1                           & 13                                   & 3749.8                             & 163                              & 4574.2                       & 373                         & 331.2                       & 180                        & 331.2                       & 787                          \\
                SUGRL                                   & 193.8                                  & 13                                 & 189.4                           & 6                                    & 2131.2                             & 147                              & 2615.1                       & 342                         & 99.5                        & 26                         & 99.5                        & 117                          \\
                GGD                                     & 254.7                                  & 15                                 & 249.2                           & 8                                    & 3747.3                             & 140                              & 4571.6                       & 340                         & 30.0                        & 100                        & 30.0                        & 1400                         \\
                \colorbox{lightgray}{Average}$^\dagger$ & \colorbox{lightgray}{490.4}            & \colorbox{lightgray}{11.9}         & \colorbox{lightgray}{482.2}     & \colorbox{lightgray}{7}              & \colorbox{lightgray}{3198.7}       & \colorbox{lightgray}{104.5}      & \colorbox{lightgray}{3906.0} & \colorbox{lightgray}{237.6} & \colorbox{lightgray}{246.4} & \colorbox{lightgray}{68.3} & \colorbox{lightgray}{246.4} & \colorbox{lightgray}{1279.1} \\
                \midrule
                \ours                                   & 60.9                                   & 0.03                               & 28.43                           & 0.01                                 & 460.7                              & 0.04                             & 564.4                        & 0.06                        & 7.3                         & 0.2                        & 6.6                         & 0.18                         \\
                \bottomrule
            \end{tabular}
            \begin{tablenotes}
                \item$\dagger$Averaged results of baseline methods in each section.
            \end{tablenotes}
        \end{threeparttable}
    }
\end{table}

\nosection{Parameter size and energy consumption.}
We calculate the parameter size and the theoretical energy consumption of all methods on six different datasets, as shown in Table~\ref{tab:efficiency_full}. Note that BANE, a factorization-based approach, is not feasible for estimating its energy consumption, and therefore is omitted from the comparison. Table~\ref{tab:efficiency_full} demonstrates the energy efficiency of graph SNNs and 1-bit GNNs over conventional GCL methods, which have significantly lower energy consumption with an increasing graph scale. In particular, \ours exhibits an average of $\sim$50x, $\sim$100x, and $\sim$1000x less energy consumption compared to graph SNNs, 1-bit GNNs, and GCL baselines, respectively. These results demonstrate that implementing advanced SNNs on energy-efficient hardware can significantly reduce the energy consumption of \ours.
While we have a series of peer GNNs in our encoders corresponding to each group of inputs, \ours also exhibits low complexity in parameter size due to parameter sharing among them. Compared to GCL baselines, the parameter size of \ours has decreased by a factor of 8 on average. Additionally, \ours has a smaller model size even compared to 1-bit GNNs and graph SNNs. These results demonstrate that \ours not only has strong representation learning abilities but also requires fewer parameters compared to other methods.

\begin{figure}[t]
    \centering
    \includegraphics[width=0.24\linewidth, height=0.15\hsize]{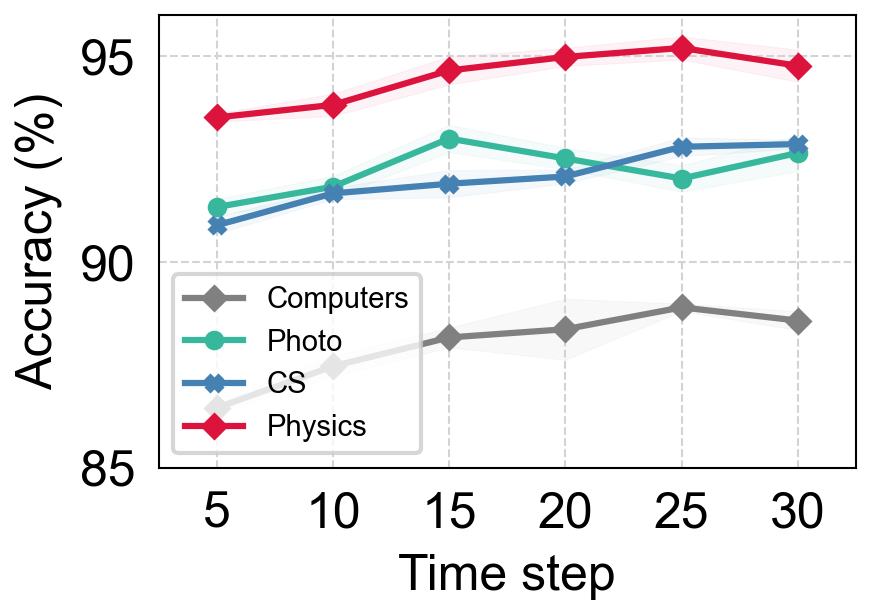}
    \includegraphics[width=0.24\linewidth, height=0.15\hsize]{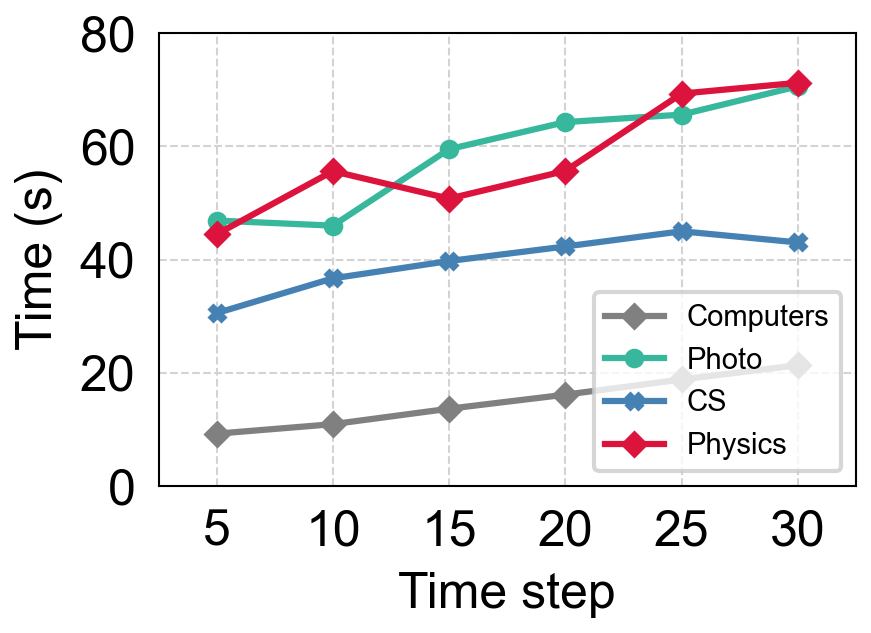}
    \includegraphics[width=0.24\linewidth, height=0.15\hsize]{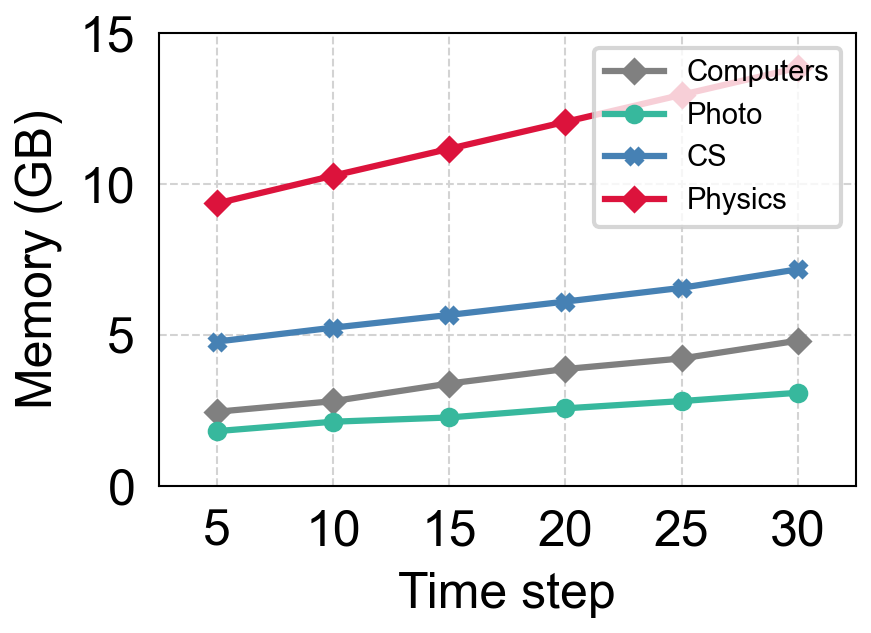}
    \includegraphics[width=0.24\linewidth, height=0.15\hsize]{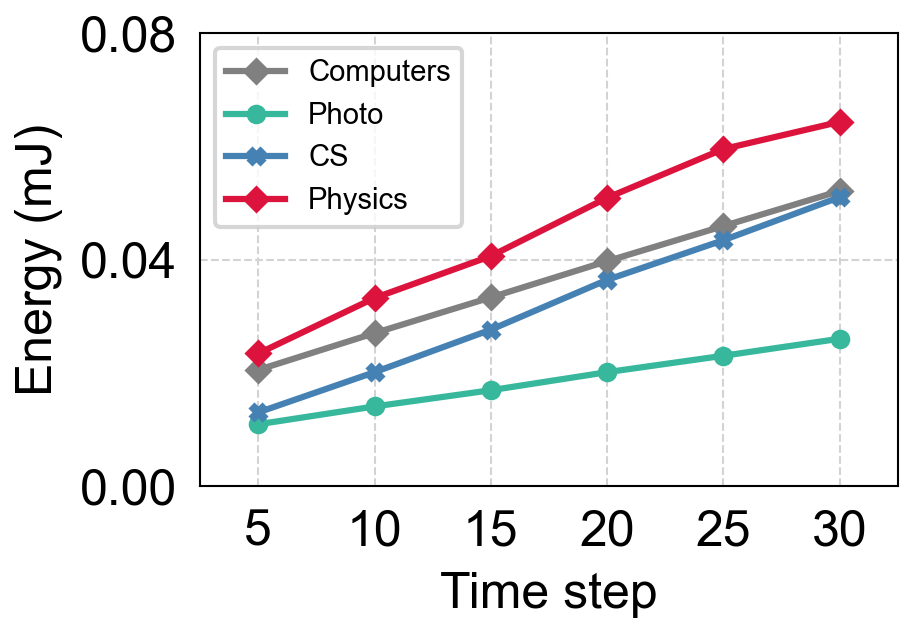}
    \caption{Effect of time step $T$ on Computers, Photo, CS, and Physics datasets.}
    \label{fig:time_step_abla}
\end{figure}

\nosection{Effect of time step $T$.}
Figure~\ref{fig:time_step_abla} shows the effect of time step $T$ in terms of accuracy performance, running time, memory usage, and energy consumption. The running time is measured as the training time of \ours until converged (with early stopping) and the memory usage refers to the maximum GPU memory usage during the training process.
We can observe that as $T$ increases, the accuracy performance of \ours gradually improves until it reaches a plateau (e.g., $T=25$). With the increase of $T$, the running time and memory usage also slightly increase, which may lead to longer training times and higher hardware requirements. Additionally, the energy consumption of \ours also increases with larger values of $T$, which can be a concern for energy-efficient applications.
Therefore, the choice of $T$ should strike a balance between accuracy performance and computational efficiency, taking into account the specific requirements of the application at hand.

\begin{table}[h]
    \centering
    \caption{Classification accuracy (\%) of \ours on six datasets with different encoder architectures. The best result for each dataset is highlighted in \textbf{\color{red} red}.}
    \label{tab:encoder}
    \begin{tabular}{l| c c c c c c c}
        \toprule
             & \textbf{Computers}                  & \textbf{Photo}                      & \textbf{CS}                         & \textbf{Physics}                    & \textbf{arXiv}                      & \textbf{MAG}                        \\
        \midrule
        SAGE & 84.4$_{\pm0.9}$                     & 91.4$_{\pm0.3}$                     & 90.8$_{\pm0.9}$                     & 94.6$_{\pm0.1}$                     & 65.7$_{\pm0.7}$                     & 28.2$_{\pm0.1}$                     \\
        GAT  & 87.9$_{\pm0.6}$                     & 92.4$_{\pm0.5}$                     & 90.5$_{\pm0.8}$                     & 92.6$_{\pm0.3}$                     & 68.0$_{\pm0.1}$                     & 30.8$_{\pm0.8}$                     \\
        GCN  & \textbf{\color{red}88.9$_{\pm0.3}$} & \textbf{\color{red}93.0$_{\pm0.1}$} & \textbf{\color{red}92.8$_{\pm0.1}$} & \textbf{\color{red}95.2$_{\pm0.6}$} & \textbf{\color{red}70.9$_{\pm0.1}$} & \textbf{\color{red}32.0$_{\pm0.3}$} \\
        \bottomrule
    \end{tabular}
\end{table}

\nosection{Effect of encoder architectures.}
Different GNN architectures offer \ours different capabilities in learning graph structure data. In our experiments, we used GCN as the default architecture in the encoder, our framework allows various choices of GNN architectures though. In order to explore the impact of different GNN architectures on the performance of \ours, we conduct ablation studies on three citation graphs with different GNN encoders, including SAGE~\cite{hamilton2017inductive}, GAT~\cite{gat}, and GCN~\cite{kipf2016semi}.
Table~\ref{tab:encoder} shows that GCN is the most effective architecture for \ours, which is consistent with prior works~\cite{dgi,maskgae,bgrl,cca_ssg}.
However, it is worth noting that other GNN architectures may still provide valuable insights in different scenarios or for different types of data. Therefore, the choice of GNN architecture should be made based on the specific requirements of the task at hand.

\begin{table}[h]
    \centering
    \caption{Classification accuracy (\%) of \ours on six datasets with different spiking neurons. The best result for each dataset is highlighted in \textbf{\color{red} red}.}
    \label{tab:neurons}
    \begin{tabular}{l| c c c c c c c}
        \toprule
             & \textbf{Computers}                  & \textbf{Photo}                      & \textbf{CS}                         & \textbf{Physics}                    & \textbf{arXiv}                      & \textbf{MAG}                        \\
        \midrule
        IF   & 88.1$_{\pm0.6}$                     & 92.9$_{\pm0.2}$                     & 91.2$_{\pm0.6}$                     & 95.0$_{\pm0.5}$                     & 68.2$_{\pm0.6}$                     & 31.0$_{\pm0.5}$                     \\
        LIF  & 88.6$_{\pm0.1}$                     & 92.9$_{\pm0.1}$                     & 92.1$_{\pm0.3}$                     & \textbf{\color{red}95.3$_{\pm0.1}$} & 68.7$_{\pm0.8}$                     & 29.8$_{\pm0.5}$                     \\
        PILF & \textbf{\color{red}88.9$_{\pm0.3}$} & \textbf{\color{red}93.0$_{\pm0.1}$} & \textbf{\color{red}92.8$_{\pm0.1}$} & 95.2$_{\pm0.6}$                     & \textbf{\color{red}70.9$_{\pm0.1}$} & \textbf{\color{red}32.0$_{\pm0.3}$} \\
        \bottomrule
    \end{tabular}
\end{table}
\nosection{Effect of spiking neurons.}
We provide a series of spiking neurons as building blocks for SNNs, including IF~\cite{IF}, LIF~\cite{LIF}, and PLIF~\cite{PLIF}. The results are shown in Table~\ref{tab:neurons}. It is observed that a simple IF neuron is sufficient for \ours to achieve good performance. By introducing a more biologically plausible leaky term, LIF increases the sparsity of output representations and achieves better performance. Additionally, the leaky term can be a learnable parameter, which endows the network with better flexibility and biological plausibility. Therefore, in most cases, PLIF achieves slightly better performance than LIF.
Overall, the choice of spiking neuron should be based on the specific requirements of the task and the available hardware. For example, IF neurons may be more suitable for tasks with lower computational requirements, while LIF or PLIF neurons may be more suitable for tasks that require higher levels of sparsity or greater biological plausibility.

\begin{figure}[h]
    \centering
    \includegraphics[width=0.3\linewidth]{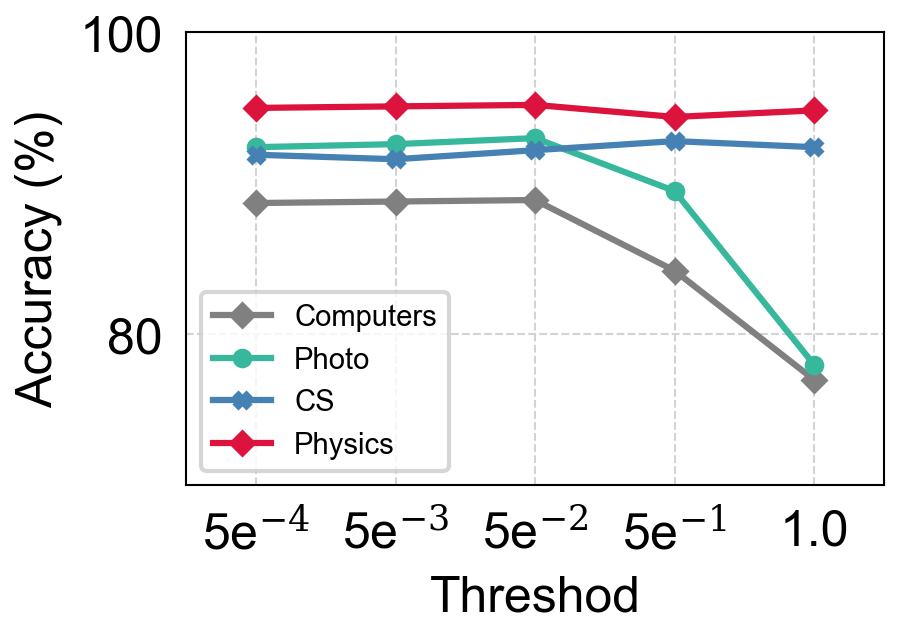}
    \includegraphics[width=0.3\linewidth]{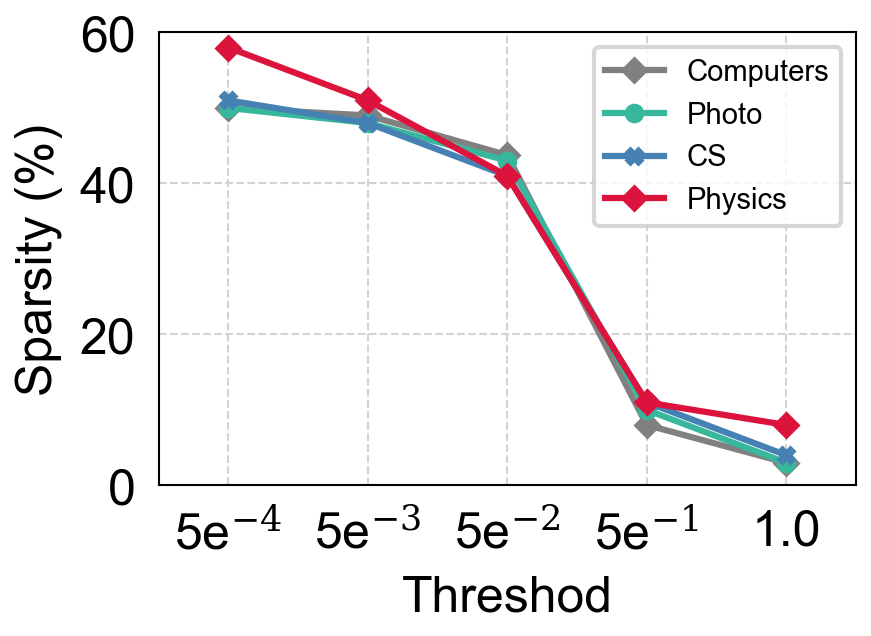}
    \caption{Accuracy (\%) and output sparsity (\%) of \ours in four datasets.}
    \label{fig:threshold_abla}
\end{figure}

\nosection{Effect of threshold $V_\text{th}$.}
The neuron threshold directly controls the firing rates of spiking neurons and, thereby, the output sparsity. We conducted an ablation study on $V_\text{th}$ on four datasets to investigate its effect, as shown in Figure~\ref{fig:threshold_abla}. We observed that the performance of \ours is sensitive to the value of $V_\text{th}$ on two dense datasets, Computers and Photo. When $V_\text{th}$ is too low, the output sparsity may be too high, leading to a loss of information and compromised performance. When $V_\text{th}$ is too high, the firing rates of the spiking neurons may be too low, resulting in a lack of discriminative power and reduced performance. In contrast, the performance of \ours is stable with different $V_\text{th}$ on CS and Physics, two relatively sparse datasets. This suggests that a dense dataset requires a smaller $V_\text{th}$ to capture the underlying structure. We can also see that the output spikes become sparser as $V_\text{th}$ increases. Overall, our ablation study demonstrates the importance of carefully selecting the value of $V_\text{th}$ to achieve optimal performance with \ours. The optimal value of $V_\text{th}$ may depend on the specific dataset; therefore, it should be chosen based on the empirical evaluation.

\begin{wrapfigure}{r}{0.3\linewidth}
    \centering
    \vspace{-5mm}
    \includegraphics[width=\linewidth]{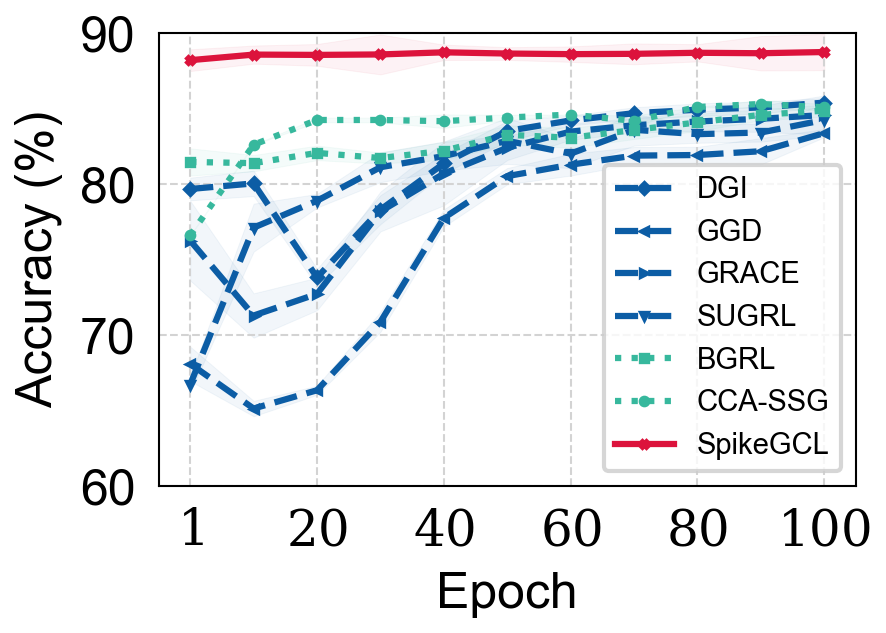}
    \vspace{-5mm}
    \caption{Convergence speed comparison among \ours and full-precision methods. {\color{blue2}Blue}: negative-sample-based methods; {\color{green2}green}: negative-sample-free methods}
    \label{fig:convergence}
\end{wrapfigure}
\nosection{Convergence.}
We compared the convergence speed of \ours with full-precision GCL baselines on Computers, which is shown in Figure~\ref{fig:convergence}. It is observed that negative-sample-free methods generally converged faster than negative-sample-based ones. However, they still required sufficient epochs to gradually improve their performance. In contrast, \ours that trained in a blockwise training paradigm demonstrates a significantly faster convergence speed over full-precision methods. In particular, 1-2 epochs are sufficient for \ours to learn good representations. Overall, our experiments demonstrate the effectiveness of \ours in improving the convergence speed of SNNs and provide insights into the benefits of the blockwise training paradigm.

\begin{figure}[t]
    \centering
    \subfigure[Photo]{
    \includegraphics[width=0.24\linewidth, height=0.15\hsize]{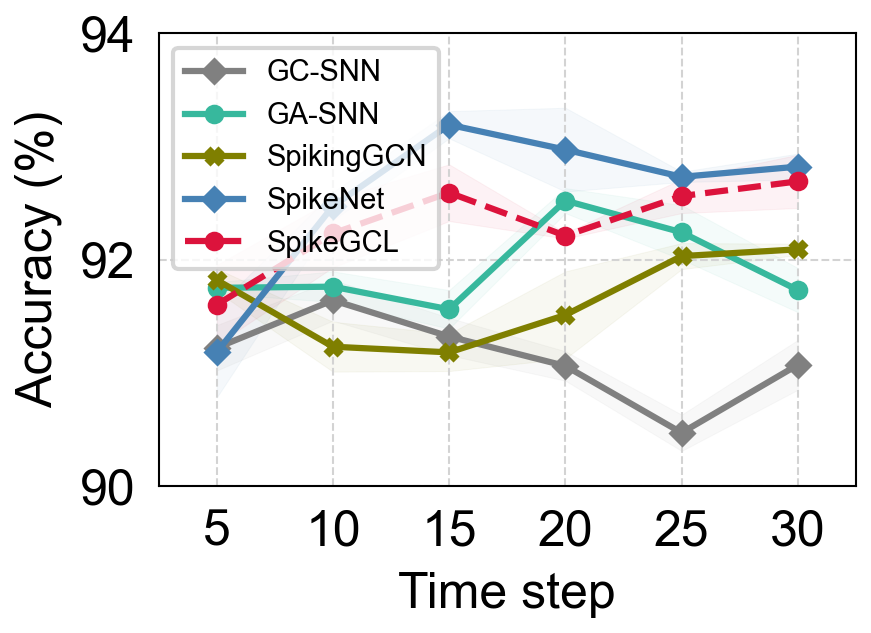}
    \includegraphics[width=0.24\linewidth, height=0.15\hsize]{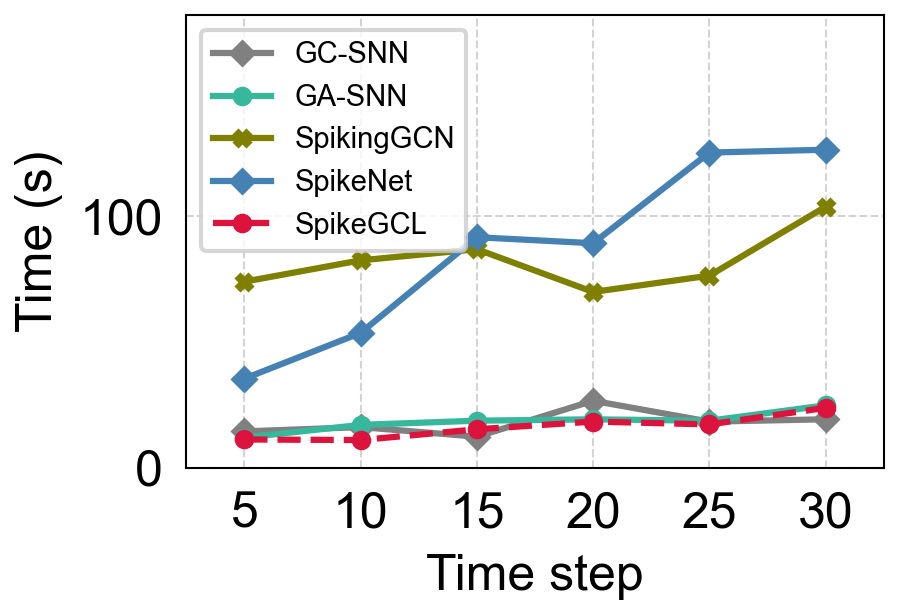}
    \includegraphics[width=0.24\linewidth, height=0.15\hsize]{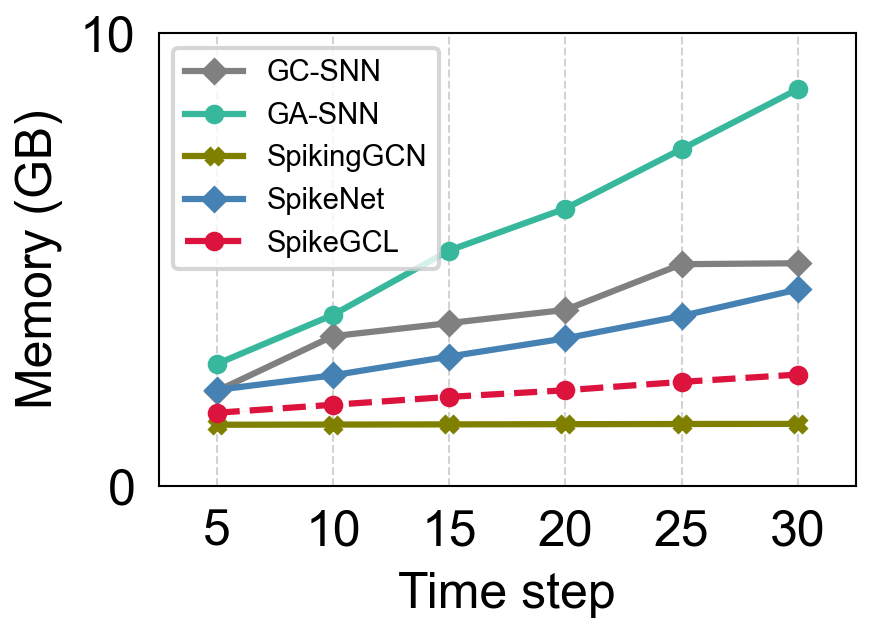}
    \includegraphics[width=0.24\linewidth, height=0.15\hsize]{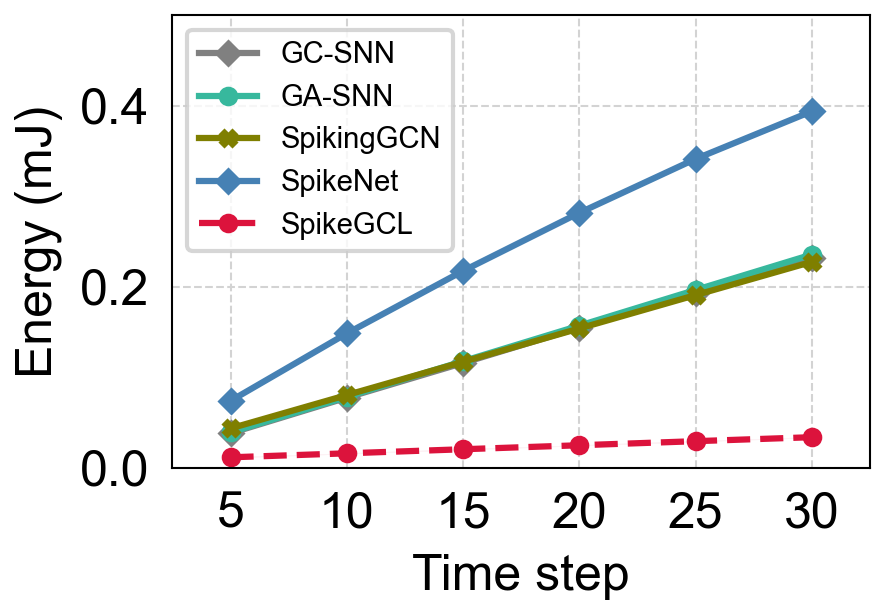}
    }
    \subfigure[CS]{
    \includegraphics[width=0.24\linewidth, height=0.15\hsize]{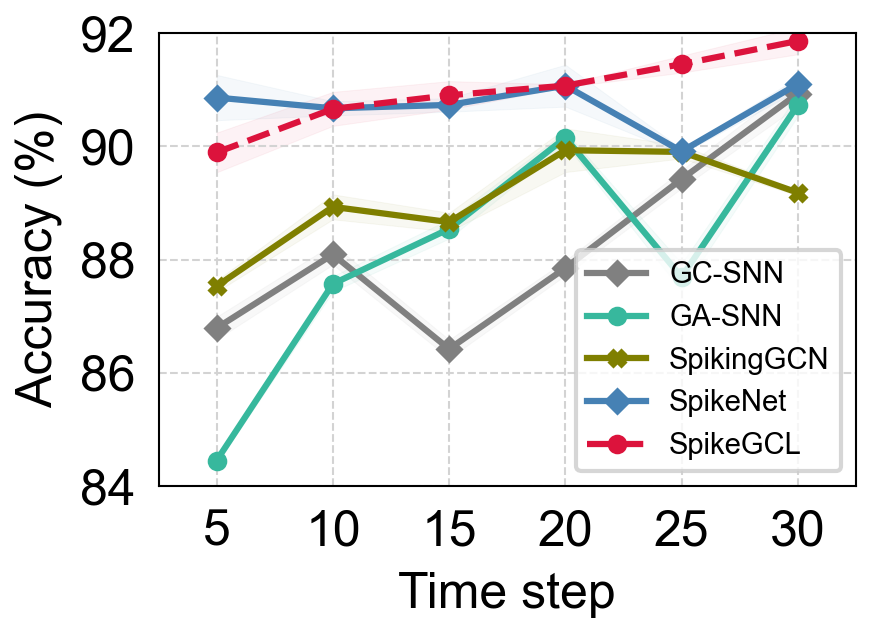}
    \includegraphics[width=0.24\linewidth, height=0.15\hsize]{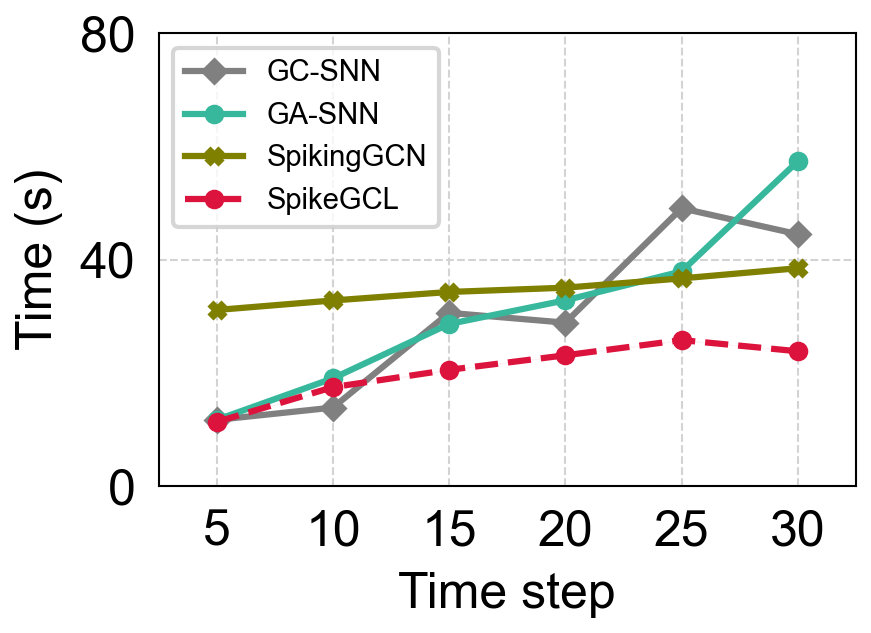}
    \includegraphics[width=0.24\linewidth, height=0.15\hsize]{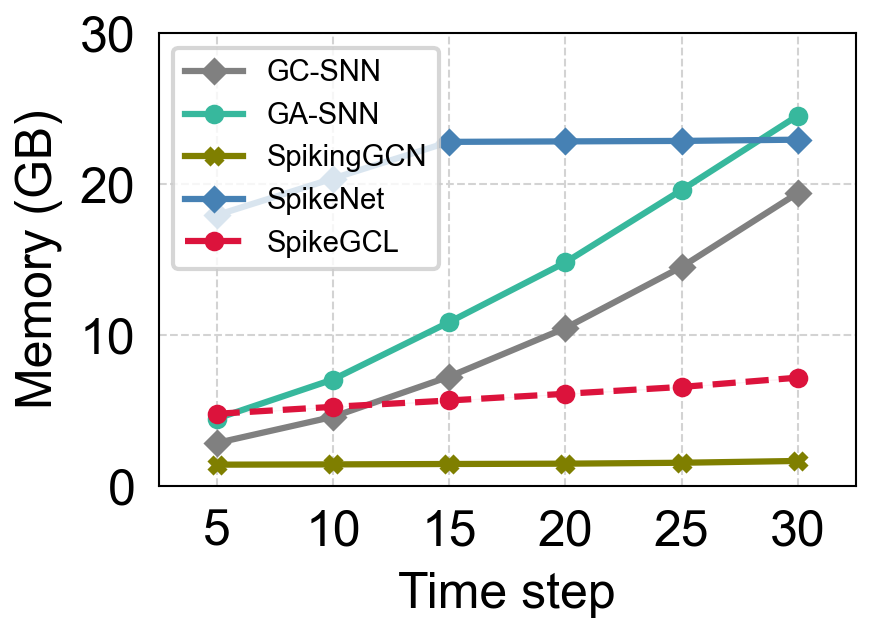}
    \includegraphics[width=0.24\linewidth, height=0.15\hsize]{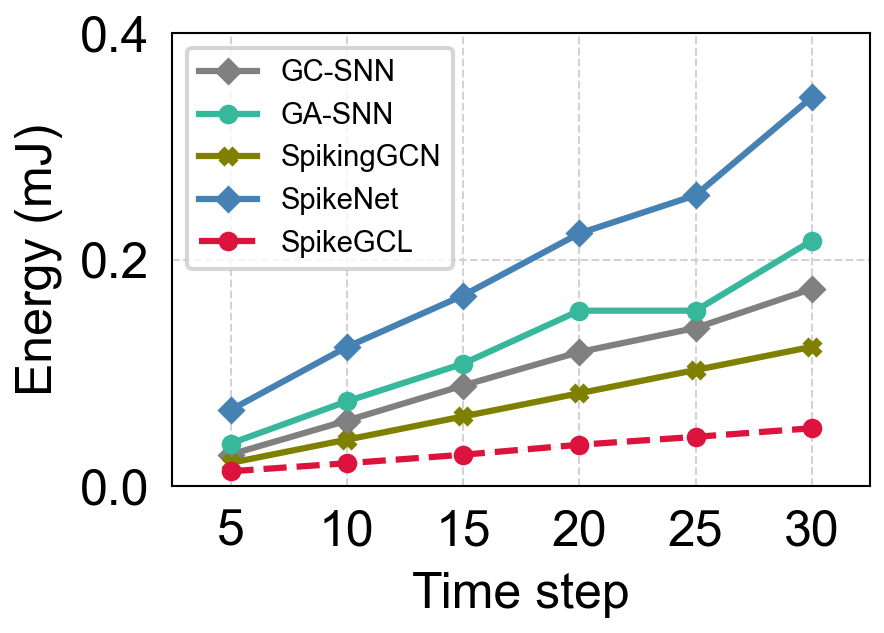}    
    }    
    \subfigure[Physics]{
    \includegraphics[width=0.24\linewidth, height=0.15\hsize]{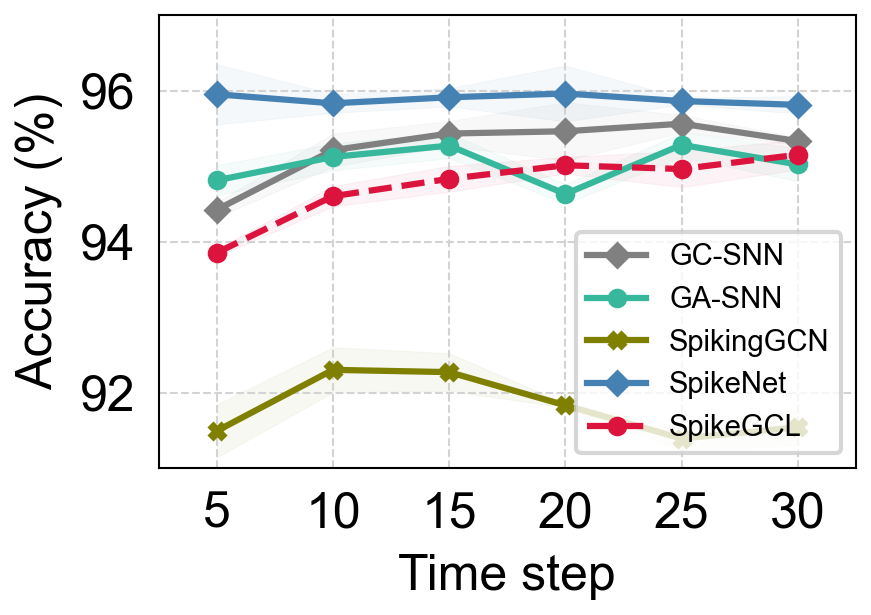}
    \includegraphics[width=0.24\linewidth, height=0.15\hsize]{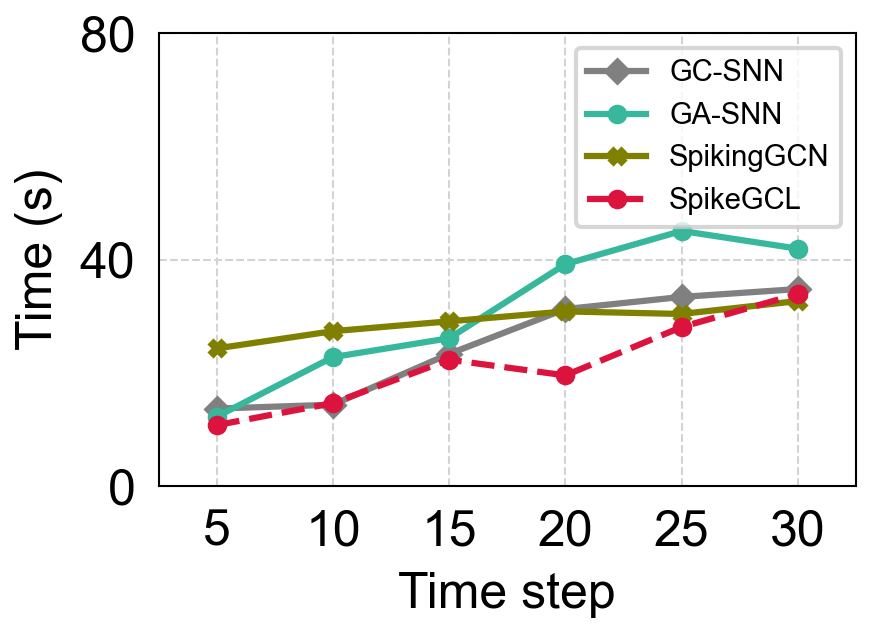}
    \includegraphics[width=0.24\linewidth, height=0.15\hsize]{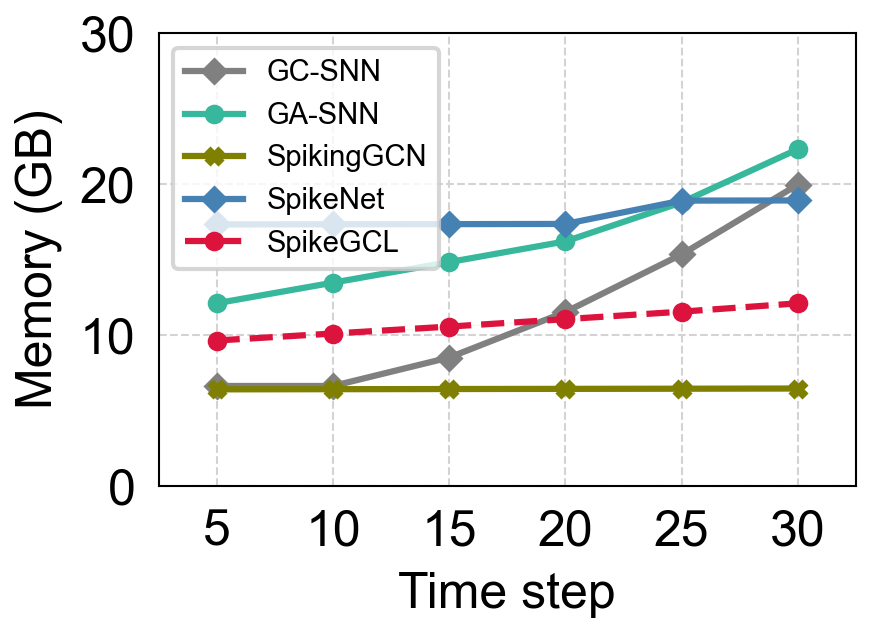}
    \includegraphics[width=0.24\linewidth, height=0.15\hsize]{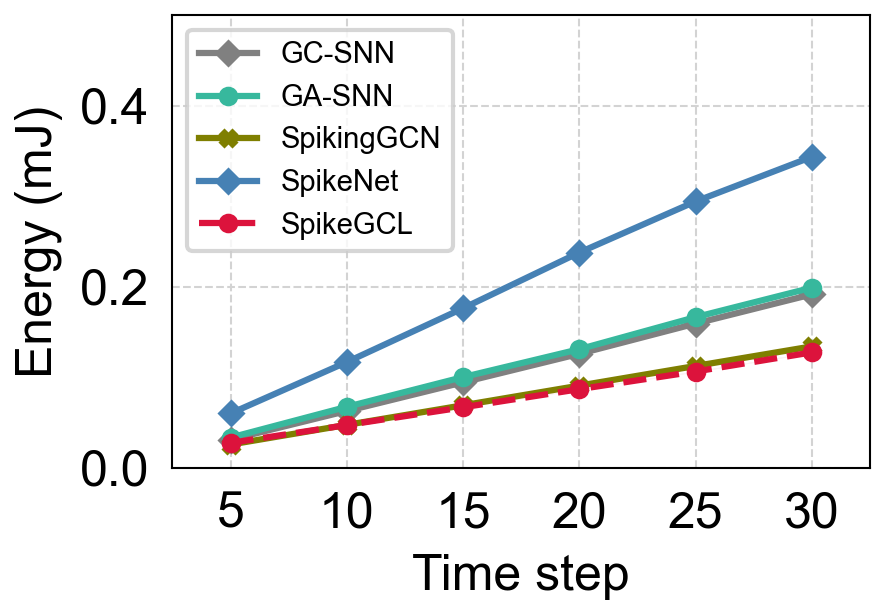}    
    }
    \caption{Comparison of SpikeGCL and other graph SNNs in terms of accuracy (\%), training time (s), memory usage (GB), and energy consumption (mJ), respectively.}
    \label{fig:additional_figs}
\end{figure}

\nosection{Connections between original input features and output spikes.}
To gain a deeper understanding of \ours, we assess the similarity between input features (partitioned by $T$ groups) and the corresponding output spikes using the Centered Kernel Alignment metric.
Centered Kernel Alignment (CKA)~\cite{cka} serves as a representation similarity metric extensively utilized for comprehending the representations learned by neural networks. CKA takes two representations $\boldsymbol{X}$ and $\boldsymbol{Y}$ as input and calculates their normalized similarity, measured in terms of the Hilbert-Schmidt Independence Criterion (HSIC):
\begin{equation}
    \mathrm{CKA}(\boldsymbol{K}, \boldsymbol{L})=\frac{\mathrm{HSIC}_0(\boldsymbol{K}, \boldsymbol{L})}{\sqrt{\mathrm{HSIC}_0(\boldsymbol{K}, \boldsymbol{K}) \mathrm{HSIC}_0(\boldsymbol{L}, \boldsymbol{L})}}
\end{equation}
Where $\boldsymbol{K}$ and $\boldsymbol{L}$ are similarity matrices of $\boldsymbol{X}$ and $\boldsymbol{Y}$ respectively. 
The results on Computers and Photo are presented in Figure~\ref{fig:cka}. 
As evident from the results, each group of features exhibits a strong correlation with the corresponding output spikes while demonstrating minimal correlation with spikes in other time steps. This is reflected in the CKA similarity matrices, with values along the diagonal being notably larger than those elsewhere. The results have also suggested that the learned binary representations are disentangled from each other, thereby providing improved expressiveness in representing the input features.

\begin{figure}[h]
    \centering
    \includegraphics[width=0.45\linewidth]{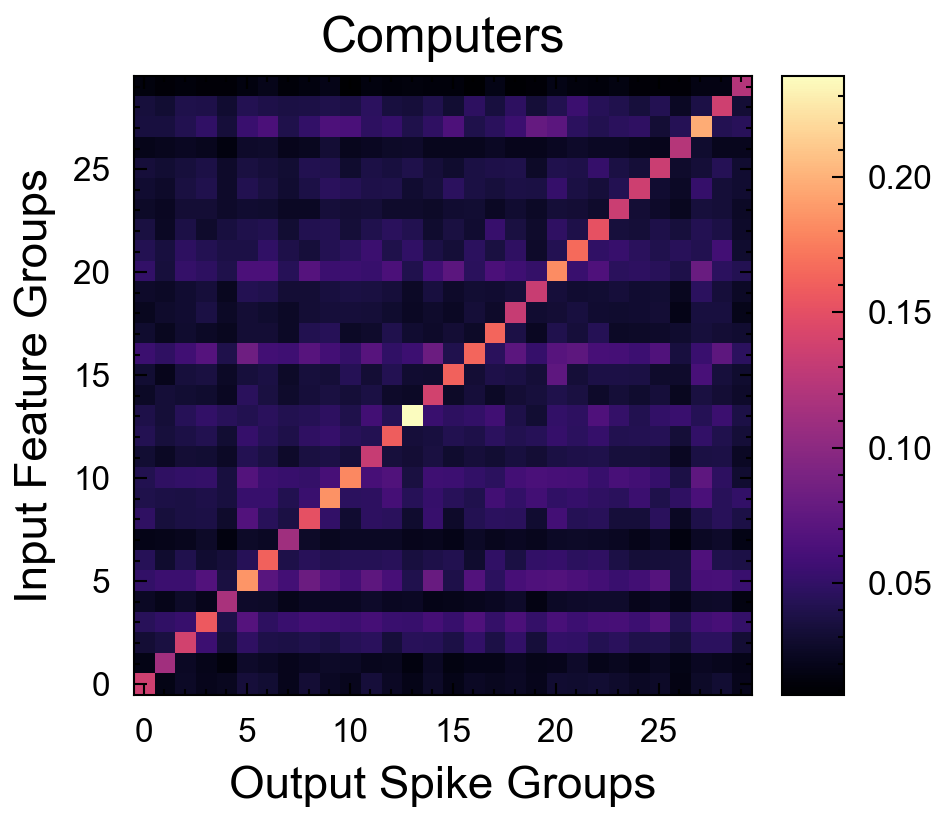}
    \includegraphics[width=0.45\linewidth]{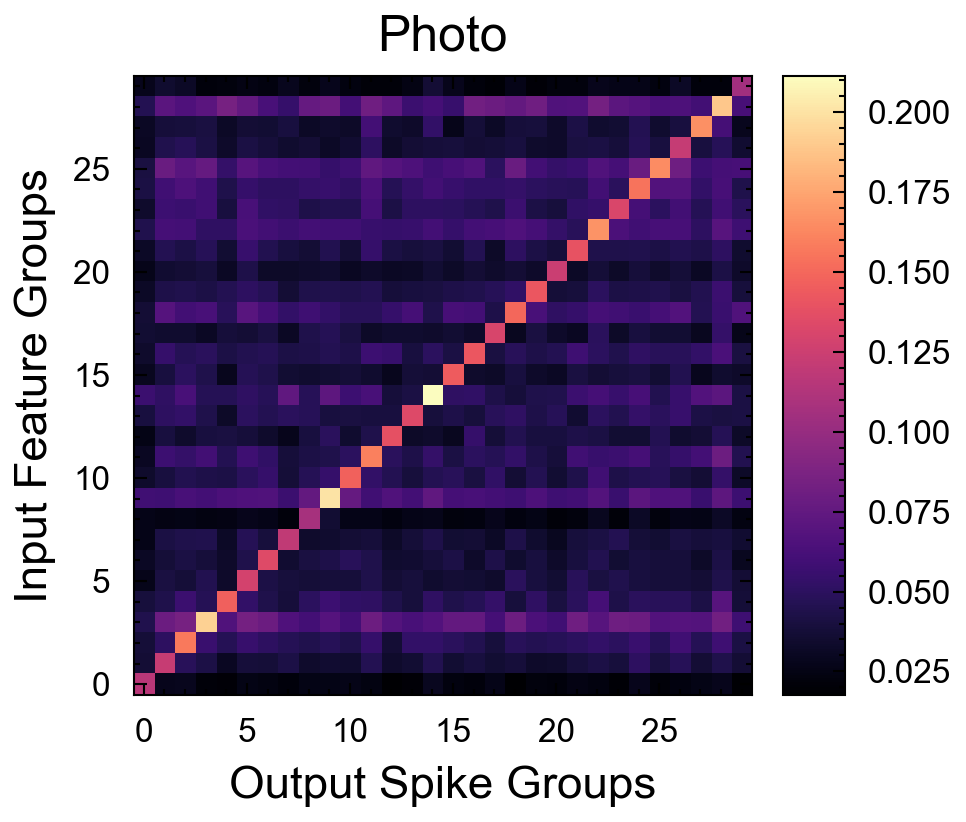}
    \caption{CKA similarity between different input feature groups and output spike groups.}
    \label{fig:cka}
\end{figure}

\begin{table}[h]
    \centering
    \caption{Classification accuracy (\%) of \ours on four datasets with different reset mechanisms and spiking neurons. The best result for each dataset is highlighted in \textbf{\color{red} red}.}
    \begin{tabular}{l|cccc}
        \toprule
                                      & Computers                         & Photo                             & CS                                & Physics                           \\
        \midrule
        zero-reset + IF               & 88.0$_{\pm0.2}$                      & 92.7$_{\pm0.3}$                      & 91.2$_{\pm0.1}$                      & 95.0$_{\pm0.2}$                      \\
        subtract-reset + IF           & 88.1$_{\pm0.1}$                      & 92.9$_{\pm0.2}$                      & 91.2$_{\pm0.1}$                      & 95.0$_{\pm0.1}$                      \\
        zero-reset + LIF              & 88.5$_{\pm0.1}$                      & 92.9$_{\pm0.3}$                      & 92.0$_{\pm0.2}$                      & 95.2$_{\pm0.1}$                      \\
        subtract-reset + LIF          & 88.6$_{\pm0.1}$                      & 92.9$_{\pm0.1}$                      & 92.1$_{\pm0.1}$                      & \textbf{\color{red} 95.3$_{\pm0.2}$} \\
        \midrule
        zero-reset + PLIF             & \textbf{\color{red} 89.0$_{\pm0.3}$} & 92.5$_{\pm0.3}$                      & 92.1$_{\pm0.1}$                      & 95.2$_{\pm0.1}$                      \\
        \ours (subtract-reset + PLIF) & 88.9$_{\pm0.3}$                      & \textbf{\color{red} 93.0$_{\pm0.1}$} & \textbf{\color{red} 92.8$_{\pm0.1}$} & 95.2$_{\pm0.6}$                      \\
        \bottomrule
    \end{tabular}
    \label{tab:reset}
\end{table}

\nosection{Effect of reset mechanisms.}
We additionally include the ablation results on reset mechanisms of different spiking neurons, as shown in Table~\ref{tab:reset}. As can be observed, the performance of \ours does not necessarily depend on the specific implementation details of the reset mechanisms. The performance gap between the best architecture and the worst is not significant. Even a simple IF neuron with a zero-reset mechanism can achieve satisfactory performance.

\end{document}